\definecolor{darkgreen}{rgb}{0,0.5,0}
\definecolor{purple}{rgb}{1,0,1}
\newcommand{\kibitz}[2]{\ifnum\Comments=1\textcolor{#1}{#2}\fi}
\newcommand{\Acal}{\mathcal{A}}
\newcommand{\Hcal}{\mathcal{H}}
\newcommand{\Xcal}{\mathcal{X}}
\newcommand{\Ycal}{\mathcal{Y}}
\DeclareMathOperator*{\argmin}{arg\,min}
\title[Bandit Online Learnability]{Multiclass Online Learnability under Bandit Feedback}
\thanks{Equal contribution} \Email{vkraman@umich.edu}\\
\begin{document}

\maketitle

\begin{abstract}%
   We study online multiclass classification under bandit feedback. We extend the results of \cite{daniely2013price} by showing that the finiteness of the Bandit Littlestone dimension is necessary and sufficient for bandit online  learnability even when the label space is unbounded. Moreover, we show that, unlike the full-information setting, sequential uniform convergence is necessary but not sufficient for bandit online learnability. Our result complements the recent work by \cite*{hanneke2023multiclass} who show that the Littlestone dimension characterizes online multiclass learnability in the full-information setting even when the label space is unbounded. %
\end{abstract}

\begin{keywords}%
  Online Learnability, Bandit Feedback, Multiclass Classification %
\end{keywords}

\section{Introduction}
In the standard online multiclass classification model, a learner plays a repeated game against an adversary. In each round $t \in [T]$, an adversary picks a labeled instance $(x_t, y_t) \in \mathcal{X} \times \mathcal{Y}$ and reveals $x_t$ to the learner. Using access to a hypothesis class $\mathcal{H} \subseteq \mathcal{Y}^{\mathcal{X}}$, the learner makes a possibly random prediction $\hat{y}_t \in \mathcal{Y}$. The adversary then reveals the true label $y_t$ and the learner then suffers the loss $\mathbbm{1}\{y_t \neq \hat{y}_t\}$. Overall, the goal of the learner is to output predictions such that its expected cumulative loss is not too much larger than the smallest cumulative loss amongst all fixed hypothesis in $\mathcal{H}$. This standard setting of online multiclass classification is commonly referred to as the \textit{full-information} setting because the learner gets to observe the true label $y_t$ at the end of each round. Perhaps a more practical setting is the \textit{bandit} feedback setting, where the learner does not get to observe the true label at the end of each round, but only the indication $\mathbbm{1}\{\hat{y}_t \neq y_t\}$ of whether its prediction was correct or not \citep*{kakade2008efficient}. One application of this setting is online advertising where the advertiser recommends an ad (label) to a user (instance), but only gets to observe whether the user clicked on the ad or not. 

Unlike the full-information setting, where online learnability of a hypothesis class $\mathcal{H}$ has been fully characterized in both the realizable and agnostic settings, less is known about online learnability under bandit feedback. Indeed, the first work on characterizing bandit online learnability is due to \cite*{DanielyERMprinciple}. They introduce a dimension named the Bandit Littlestone dimension (BLdim), and show that it exactly characterizes the bandit online learnability of deterministic learners in the realizable setting. Even prior to that, \cite{auer1999structural} related the Bandit Littlestone dimension (which is the optimal deterministic mistake bound with bandit feedback) to the multiclass extension of the Littlestone dimension (Ldim) \citep{Littlestone1987LearningQW, DanielyERMprinciple} and showed that $\text{BL}(\mathcal{H}) = O (|\mathcal{Y}|\log(|\mathcal{Y}|)\text{L}(\mathcal{H}))$, where $\text{BL}(\mathcal{H})$ is the BLdim of $\mathcal{H}$, $\text{L}(\mathcal{H})$ is the Ldim of $\mathcal{H}$, and $|\mathcal{Y}|$ denotes the size of the label space $\mathcal{Y}$. Following the work of \cite{DanielyERMprinciple}, \cite{daniely2013price} studies the price of bandit feedback by quantifying the ratio between optimal error rates of the two feedback models in the realizable and agnostic settings.
Using the inequality $\text{L}(\mathcal{H}) \leq \text{BL}(\mathcal{H}) = O(|\mathcal{Y}|\log(|\mathcal{Y}|)\text{L}(\mathcal{H}))$, they infer that BLdim characterizes realizable online learnability under bandit feedback when $|\mathcal{Y}|$ is finite. Later, \cite{long2017new} and  \cite{geneson2021note} proved that this upperbound on BLdim is the best possible up to a leading constant. They also found the exact optimal leading constant.

Moving beyond the realizable setting, \cite{daniely2013price} give an agnostic online learner whose expected regret, under bandit feedback, is at most $O\left(\sqrt{\text{L}(\mathcal{H})|\mathcal{Y}|T\log(T|\mathcal{Y}|)}\right)$. As a corollary, when $|\mathcal{Y}|$ is finite, they infer that the BLdim qualitatively characterizes  agnostic bandit online learnability.  In addition, \cite{daniely2013price} note a gap of $\tilde{O}(\sqrt{|\mathcal{Y}|})$ between their upperbound in the bandit setting and the known lowerbound of $\Omega(\sqrt{\text{L}(\mathcal{H}) \, T})$ in the full-information setting \citep*{ben2009agnostic}. Accordingly, they ask whether a tighter quantitative characterization of bandit learnability is possible in the agnostic setting. In fact, it is unclear whether BLdim  characterizes bandit online learnability when $|\mathcal{Y}|$ is unbounded. 

Along this direction, there has been a recent surge of interest in characterizing learnability when the label space is unbounded. For example, in a recent breakthrough result, \cite*{brukhim2022characterization} show that the Daniely-Schwartz (DS) dimension, defined by \cite{daniely2014optimal}, characterizes multiclass learnability in the PAC setting even when the label space in unbounded. Following this work, \cite*{hanneke2023multiclass} show that the multiclass extension of the Littlestone dimension, originally proposed by \cite*{DanielyERMprinciple}, continues to characterize online multiclass learnability under \textit{full-information feedback} when the label space is unbounded.  Motivated by these results, we ask whether the BLdim continues to characterize \textit{bandit online learnability} even when the label space is unbounded. In particular, can the optimal expected regret in the realizable and agnostic settings, under bandit feedback, be expressed as a function of the BLdim without a dependence on $|\mathcal{Y}|$? 

In this paper, we resolve this question by showing that the finiteness of BLdim is necessary and sufficient for bandit online learnability, in both the realizable and agnostic settings, even when the label space is unbounded. 

\begin{theorem}\label{thm:necsuff}
    Let $\Hcal \subseteq \Ycal^\Xcal$ and $C_{\mathcal{H}} := \sup_{x\in \mathcal{X}} |\{h(x) : h \in \mathcal{H}\}|$. The following statements are equivalent:
    \begin{enumerate}
        \item $\Hcal$ is bandit online learnable.
        \item $\operatorname{BL}(\Hcal) < \infty$.
        \item $C_{\mathcal{H}} < \infty$ and $\operatorname{L}(\Hcal) < \infty$.
    \end{enumerate}
\end{theorem}

We prove $(2) \implies (1)$, $(3) \implies (2)$ in Section~\ref{sec:Bldimsuff}, and $(1) \implies (3)$ in Section~\ref{sec:necc}.
The proof of $(2) \implies (1)$ is given by an agnostic online learner whose expected regret under bandit feedback can be expressed as a function of BLdim without any dependence on $|\mathcal{Y}|$. 

\begin{theorem} \label{thm:banditagn}
    For any $\mathcal{H} \subseteq \mathcal{Y}^{\mathcal{X}}$, there exists an agnostic online learner  whose expected regret, under bandit feedback, is at most
$$8\sqrt{\emph{\text{L}}(\mathcal{H})\emph{\text{BL}}(\mathcal{H})T \log(T)}.$$
\end{theorem}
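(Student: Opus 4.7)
My plan is to lift the realizable bandit learner of \cite{DanielyERMprinciple} (which achieves mistake bound $\text{BL}(\mathcal{H})$) to an agnostic one via an experts-aggregation scheme whose two ingredients are separately controlled by $\text{L}(\mathcal{H})$ and $\text{BL}(\mathcal{H})$. At a high level, the target bound $\sqrt{\text{L}\cdot\text{BL}\cdot T\log T}$ factors as a $\sqrt{T\log N}$ piece from expert aggregation (with $\log N = O(\text{L}\log T)$) times a $\sqrt{K}$ piece from bandit exploration (with $K=O(\text{BL})$), which is precisely the regret of a suitably designed EXP4-style algorithm on this expert family. The realizable-to-agnostic piece will pay $\text{L}$, and the full-information-to-bandit piece will pay $\text{BL}$.

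For the expert family, I would use a Ben-David-P\'al-Shalev-Shwartz (BPS) style construction: for each subset $S\subseteq[T]$ with $|S|\leq \text{L}(\mathcal{H})$, define an expert $E_S$ that simulates the full-information SOA while treating the rounds in $S$ as ``mistake rounds'' that drive its internal version-space updates. Since we are in the bandit regime, the SOA subroutine inside each $E_S$ would be replaced by the Bandit SOA of \cite{DanielyERMprinciple}, which requires only binary correct/incorrect feedback. The cardinality of this expert family is at most $\binom{T}{\text{L}}\leq (eT/\text{L})^{\text{L}}$, so $\log|\mathcal{E}|=O(\text{L}\log T)$. A BPS-style induction should then show that for any comparator $h^*\in\mathcal{H}$, some specific $E_{S^*}\in\mathcal{E}$ has cumulative bandit loss at most $\sum_t \mathbbm{1}\{h^*(x_t)\neq y_t\}+O(\text{BL})$. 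I would then aggregate the experts using an EXP4-style algorithm whose per-round sampling distribution is supported on at most $O(\text{BL})$ labels -- namely, the labels recommended by the currently ``alive'' experts in $\mathcal{E}$. Plugging $\log N=O(\text{L}\log T)$ and $K=O(\text{BL})$ into the standard EXP4 regret bound $O(\sqrt{KT\log N})$, and combining with the near-realizable guarantee of the best expert, yields the claimed $O(\sqrt{\text{L}\cdot\text{BL}\cdot T\log T})$ regret up to the leading constant.

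The main obstacle is formally establishing the bound $K=O(\text{BL})$ on the effective arm count, without any $|\mathcal{Y}|$ dependence. This requires a combinatorial argument showing that at each round the union of labels recommended by the alive experts in $\mathcal{E}$ lies in a set of size $O(\text{BL})$, which should follow from how the bandit tree structure underlying the Bandit SOA limits the number of labels that can be simultaneously ``live'' on any path. A secondary but delicate point is executing the BPS-style agnostic-to-realizable reduction in the purely bandit setting: the classical BPS argument uses the revealed true labels to drive SOA's updates on non-mistake rounds, and we must verify that the analogue driven only by BSOA's correctness bits still yields an $\text{L}$-sized cover of $\mathcal{H}$ with the needed approximation guarantee against every comparator $h^*$. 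If either of these issues turns out to be too rigid, a backup is to let $\log N$ depend on $\text{BL}$ (via a BLdim-based cover) and recover a comparable bound, but the cleanest presentation is the one above.
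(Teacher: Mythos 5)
There is a genuine gap, and it is exactly the point you flag as your ``main obstacle'': the claim that the effective arm count per round is $O(\text{BL}(\mathcal{H}))$. Your proposed justification---that the tree structure underlying the Bandit SOA limits how many labels can be ``live'' on a path---is a gesture, not an argument, and it is aimed at the wrong object. The fact you need is simpler and stronger, and it is the crux of the paper's proof (Lemma \ref{lem:proj}): for \emph{every} $x \in \mathcal{X}$, $|\mathcal{H}(x)| \leq \text{BL}(\mathcal{H}) + 1$. The proof is a short pigeonhole: if some $x$ had $\text{BL}(\mathcal{H})+2$ distinct values under $\mathcal{H}$, then the tree of depth $\text{BL}(\mathcal{H})+1$ with every internal node labeled $x$ is shattered, since any root-to-leaf path carries only $\text{BL}(\mathcal{H})+1$ edge labels and so some hypothesis avoids all of them. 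Once you have this uniform projection bound, no bespoke EXP4 analysis is needed at all: the paper relabels each $\mathcal{H}(x)$ injectively into $[C]$ with $C \leq \text{BL}(\mathcal{H})+1$ (Lemma \ref{lem:simplify}; the relabeling preserves $\text{L}$, and regret transfers because any label outside $\mathcal{H}(x_t)$ is wrong for the learner and for every comparator alike), invokes the known agnostic bandit bound of Theorem \ref{thm:finitek} with $|\mathcal{Y}| = C$, and cleans up constants via the case split $T \leq \text{BL}(\mathcal{H})$ versus $T > \text{BL}(\mathcal{H})$. Your plan, even if completed, would amount to re-deriving Theorem \ref{thm:finitek} from scratch with this lemma inserted.

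Your expert family is also miscalibrated. If each expert simulates the \emph{Bandit} SOA driven by hypothesized correctness bits, then the number of deviation rounds needed for some expert to track a comparator $h^*$ is BSOA's realizable mistake bound, which is $\text{BL}(\mathcal{H})$, not $\text{L}(\mathcal{H})$---indeed Theorem \ref{thm:reallearn} makes $\text{BL}(\mathcal{H})$ a \emph{lower} bound for any deterministic learner under bandit feedback. With $|S| \leq \text{BL}(\mathcal{H})$ you get $\log N = O(\text{BL}(\mathcal{H})\log T)$ and regret $O\bigl(\text{BL}(\mathcal{H})\sqrt{T\log T}\bigr)$, which misses the claimed bound. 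To keep $\log N = O(\text{L}(\mathcal{H})\log T)$ you must instead place the \emph{full-information} SOA inside the experts; this is legitimate even under bandit feedback, since BPS-style experts never consume real feedback (they update on their own or assigned labels, and only the aggregator uses the bandit bit), but then in the multiclass setting each deviation round additionally requires specifying which label to substitute, and bounding those choices without a $|\mathcal{Y}|$ factor again requires the projection bound $|\mathcal{H}(x)| \leq \text{BL}(\mathcal{H})+1$. So both holes in your plan---the arm count and the label-assignment count---are plugged by the same single lemma, which your proposal does not supply.
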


 Theorem \ref{thm:banditagn} provides an improvement over the upperbound given by \cite{daniely2013price} when $|\mathcal{Y}| \gg \text{BL}(\mathcal{H})$. In fact, the gap between $|\mathcal{Y}|$ and $\text{BL}(\mathcal{H})$ can be arbitrary.  Consider the case where $\mathcal{Y} = \mathbb{N}$ but $|\mathcal{H}| < \infty$. Then, its not hard to see that $\text{BL}(\mathcal{H}) \leq |\mathcal{H}|$ but $|\mathcal{Y}| = \infty$.  

In addition to characterizing learnability, there has been recent interest in showing a separation between uniform convergence and learnability. For example, \cite*{montasser2019vc} show that while uniform convergence is sufficient for adverarsially robust PAC learnability, it is not necessary. Likewise, for online mutliclass learning under full-information feedback, \cite{hanneke2023multiclass} give a class that is learnable, but the online analog of uniform convergence \citep*{rakhlin2015sequential}, termed Sequential Uniform Convergence (SUC), does not hold. Towards this end, we ask whether there is a separation between SUC and bandit online learnability.  We answer this question affirmatively: while SUC is \textit{necessary} for bandit learnability, it is not sufficient. 

 \begin{theorem} \label{thm:banditUC}
 If a hypothesis class is online learnable under bandit feedback, then it enjoys the SUC property. However, there exists a class which satisfies the SUC property, but is not online learnable under bandit feedback. 
 \end{theorem}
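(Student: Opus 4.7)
The theorem contains two assertions; I would handle them separately.

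\textbf{Part 1 (necessity).} For the $0$–$1$ multiclass loss, the induced class $\ell \circ \mathcal{H} := \{(x,y) \mapsto \mathbbm{1}[h(x) \neq y] : h \in \mathcal{H}\}$ is binary-valued, and by the characterization of sequential Rademacher complexity for binary function classes \citep{rakhlin2015sequential}, SUC for $\mathcal{H}$ is equivalent to $\text{Ldim}(\ell \circ \mathcal{H}) < \infty$. Combined with Theorem~\ref{thm:necsuff}, the task reduces to the combinatorial implication $\text{BL}(\mathcal{H}) < \infty \Longrightarrow \text{Ldim}(\ell \circ \mathcal{H}) < \infty$, which I would establish by contrapositive: from a shattered binary Ldim tree of depth $d$ for $\ell \circ \mathcal{H}$ with root $(x^*, y^*)$, construct an adversary for the realizable bandit game forcing $d$ mistakes, yielding $\text{BL}(\mathcal{H}) \geq d$. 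The adversary traces the tree top-down: at the current node labeled $(x, y)$, present $x$; if the learner predicts $\hat{y} = y$, descend into the loss-$1$ subtree by declaring true label $y^{\mathrm{true}} \neq y$; else, descend the loss-$0$ subtree with $y^{\mathrm{true}} = y$. Both choices force a mistake.

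The delicate step—and what I expect to be the main obstacle—is picking the specific $y^{\mathrm{true}} \neq y$ in the first case: the bandit adversary must commit to a single label, but the witnesses in the loss-$1$ subtree may place $h(x)$ at many different values of $\mathcal{Y} \setminus \{y\}$, so restricting the version space to a single fiber could break the remaining shattering. I would address this with an inductive version-space argument: maintain, as invariant, that the current version space admits a binary Ldim shattered tree of the remaining required depth, and show that \emph{some} fiber $\{h : h(x) = y'\}$ with $y' \neq y$ inherits enough of that shattering to continue the bandit recursion—either directly, by analyzing the fiber decomposition $\{h : h(x) \neq y\} = \bigsqcup_{y' \neq y} \{h : h(x) = y'\}$, or, if no single fiber suffices on its own, by pigeonholing over witness labels at a deeper node of the shattered subtree to effectively re-root the construction at a different instance.

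\textbf{Part 2 (insufficiency).} I would construct an explicit class with $\text{Ldim}(\ell \circ \mathcal{H}) < \infty$ yet $\text{BL}(\mathcal{H}) = \infty$, exploiting the vacuousness of the Daniely–Schwartz inequality $\text{BL}(\mathcal{H}) \leq 4|\mathcal{Y}|\ln|\mathcal{Y}|\,\text{L}(\mathcal{H})$ when $|\mathcal{Y}| = \infty$. A natural candidate lives on $\mathcal{X} = \mathcal{Y} = \mathbb{N}$: an infinite ``branching'' class where each instance $x$ admits, for every learner prediction $y_{\mathrm{pred}}$, a distinct label $y'$ whose fiber itself recurses in the same fashion (making BLdim infinite by the recursive characterization), while simultaneously every individual $(x,y)$-fiber $\{h : h(x) = y\}$ has rigid structure—for instance, any two hypotheses in it agree on a cofinite set of instances—so that binary shattering of the loss class collapses after a bounded depth. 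Verifying $\text{Ldim}(\ell \circ \mathcal{H}) < \infty$ then amounts to a case analysis showing that at every node $(x,y)$ of a putative binary shattered tree, at least one of the induced sides $\{h : h(x) = y\}$ or $\{h : h(x) \neq y\}$ falls into a configuration whose further binary loss Ldim is bounded by a constant depending only on the depth used so far, precluding arbitrarily deep shattering.
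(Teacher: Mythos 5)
Your Part~1 is correct in essence but takes a genuinely different route from the paper, and the ``delicate step'' you flag is a phantom obstacle. Under bandit feedback the adversary never announces a true label, only the loss bit; realizability requires only that \emph{some single} hypothesis be consistent with the whole stream at the end of the game. When the learner predicts $\hat{y} = y$ at a node labeled $(x,y)$, you descend the loss-$1$ subtree, \emph{all} of whose shattering witnesses satisfy $h(x) \neq y = \hat{y}$, so the feedback ``wrong'' remains consistent with every one of them regardless of which fiber the eventual realizing hypothesis lands in; no commitment to a single $y^{\mathrm{true}}$ is ever needed, and your inductive fiber/pigeonhole repair is both unnecessary and unlikely to succeed (a single fiber $\{h : h(x) = y'\}$ need not inherit any deep shattering from the loss-$1$ subtree). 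Phrased purely combinatorially, your argument converts an SG tree of depth $d$ directly into a BLdim tree of depth $d$: label each node by the instance $x$, route the edge indexed by $y$ into the conversion of the loss-$1$ subtree and every edge indexed by $y' \neq y$ into a copy of the conversion of the loss-$0$ subtree; the SG witness for the induced $\{0,1\}$-path then avoids every label along the BL path. This yields the clean inequality $\text{SG}(\mathcal{H}) \leq \text{BL}(\mathcal{H})$, which together with Theorem~\ref{prop:sguc} and Theorem~\ref{thm:necsuff} proves the first half (a non-circular appeal, since the paper proves Theorem~\ref{thm:necsuff} independently of Theorem~\ref{thm:banditUC}). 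The paper argues differently: it bounds projection sizes by $\text{BL}(\mathcal{H})+1$ (Lemma~\ref{lem:proj}), relabels into a finite label space $[C]$ preserving Ldim and SGdim (Lemma~\ref{lem:simplify}), and invokes the finite-label bound of Theorem~\ref{prop:sgldim}, obtaining $\text{SG}(\mathcal{H}) = O(\text{L}(\mathcal{H})\log \text{BL}(\mathcal{H}))$ (Corollary~\ref{cor:banditUC}). The paper's bound is quantitatively sharper when $\text{L}(\mathcal{H}) \ll \text{BL}(\mathcal{H})$; your route is more elementary and self-contained.

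Part~2, by contrast, has a genuine gap: you never exhibit a concrete class, and the verification that $\text{L}(\ell \circ \mathcal{H}) < \infty$ for your sketched recursive branching construction is deferred entirely, so as written it is a plan rather than a proof. The elaborate recursion is also overkill: the paper's witness (Lemma~\ref{lem:gap}) is simply the class of constant functions $\mathcal{H} = \{h_a : a \in \mathbb{N}\}$ with $h_a(x) = a$ on $\mathcal{Y} = \mathbb{N}$, for which $\text{L}(\mathcal{H}) = \text{SG}(\mathcal{H}) = 1$ (exactly one hypothesis is correct on any labeled pair, so no shattering survives depth $2$) while $\text{BL}(\mathcal{H}) = \infty$ since $|\mathcal{H}(x)| = \infty$ (via Lemma~\ref{lem:proj}, or directly: on a single repeated instance some constant avoids any finite set of predictions). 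One further care point your sketch glosses over: since BLdim per se characterizes \emph{deterministic} mistake bounds (Theorem~\ref{thm:reallearn}), concluding non-learnability against randomized learners from $\text{BL}(\mathcal{H}) = \infty$ requires the randomized lower bound — the paper's Lemma~\ref{lem:necc}, which for this class forces expected regret at least $(T-1)/2$; if you instead cite Theorem~\ref{thm:necsuff} for this step, you should note explicitly that its necessity direction rests on that same lemma.
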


 Theorem \ref{thm:banditUC} is in contrast to the full information setting where SUC is sufficient, but not necessary for online learnability \citep{hanneke2023multiclass}. We note that Theorem \ref{thm:banditUC} along with Example 1 from \cite{hanneke2023multiclass} also shows a separation in online learnability between the full-information and bandit feedback settings. Figure \ref{fig:relation} visualizes the landscape of learnability for online multiclass problems.
 \begin{figure}[htp]
    \centering
    \includegraphics[width=5cm]{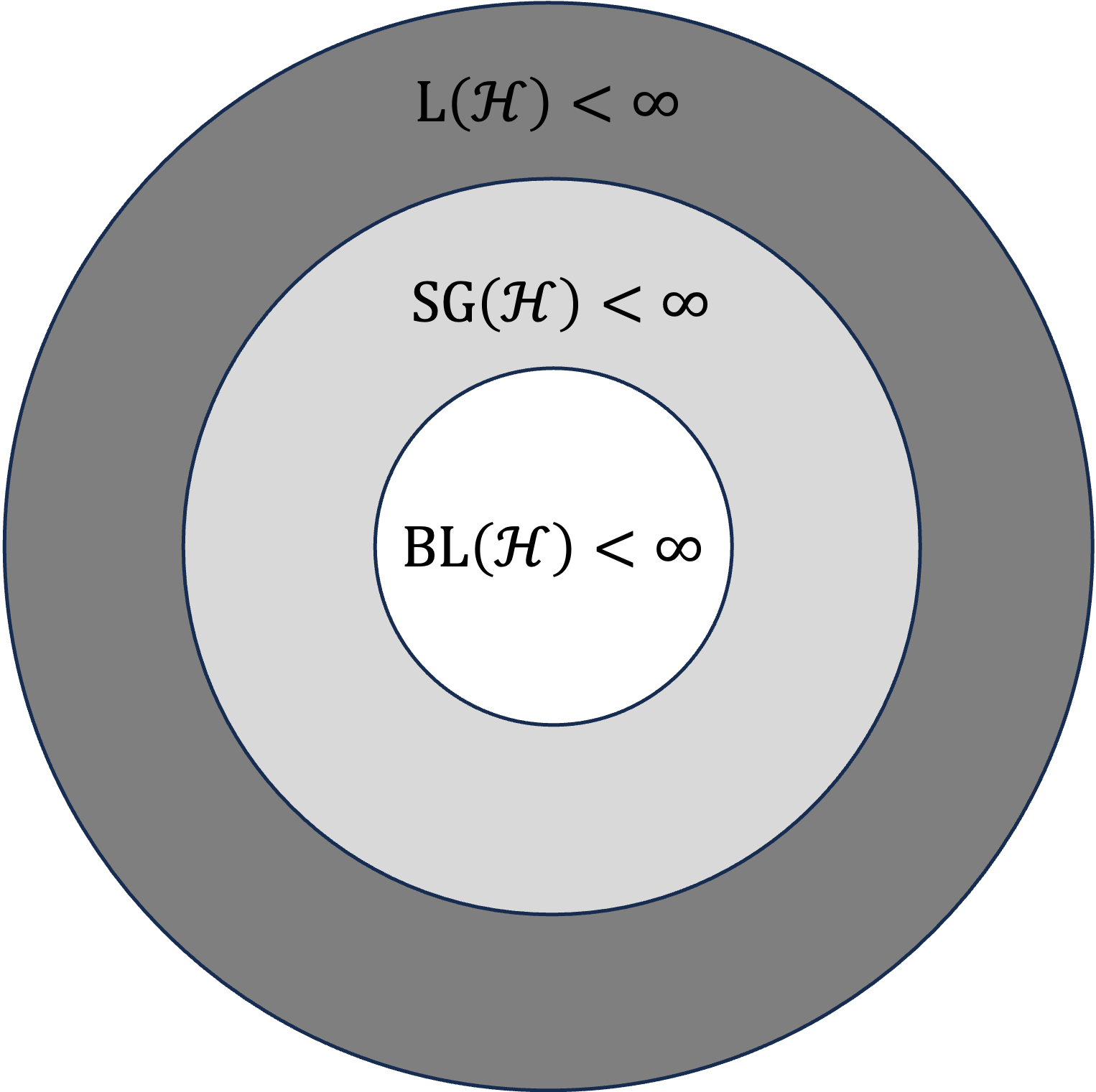}
    \caption{Landscape of multiclass online learnability. The Sequential Graph (SG) dimension (see Definition \ref{def:sgdim}) characterizes SUC.}
    \label{fig:relation}
\end{figure}

\section{Preliminaries}

Let $\Xcal$ denote the instance space, $\Ycal$ be the label space, and $\mathcal{H} \subseteq \mathcal{Y}^{\mathcal{X}}$ denote a hypothesis class. In this paper, we place no assumptions on the size of the label space  $\mathcal{Y}$. 
Given an instance $x \in \mathcal{X}$, we let $\mathcal{H}(x) := \{h(x): h \in \mathcal{H}\}$ denote the projection of $\mathcal{H}$ onto $x$. 
As usual, $[N]$ is used to denote $\{1, 2, \ldots, N\}$. 

\subsection{Online Learning} 
In online multiclass classification with bandit feedback, an adversary plays a sequential game with the learner over $T$ rounds. In each round $t \in [T]$, an adversary selects a labeled instance $(x_t, y_t) \in \mathcal{X} \times \mathcal{Y}$ and reveals $x_t$ to the learner. The learner makes a (potentially randomized) prediction $\hat{y}_t \in \mathcal{Y}$. Finally, the adversary reveals to the learner its loss $\mathbbm{1}\{\hat{y}_t \neq y_t\}$, but not the true label $y_t$. Given a hypothesis class  $\mathcal{H} \subseteq \mathcal{Y}^{\mathcal{X}}$, the goal of the learner is to output predictions $\hat{y}_t$ under \textit{bandit feedback} such that its \textit{expected regret}

$$\mathbb{E}\left[\sum_{t=1}^T \mathbbm{1}\{\hat{y}_t \neq y_t\} - \inf_{h \in \mathcal{H}}\sum_{t=1}^T  \mathbbm{1}\{h(x_t) \neq  y_t \}\right] $$
is small. A hypothesis class $\mathcal{H}$ is said to be bandit online learnable if there exists an algorithm such that for any sequence of labeled examples $(x_1, y_1), ..., (x_T, y_T)$,  its expected regret, under bandit feedback, is a sublinear function of $T$. In this paper, we consider the oblivious setting where the adversary selects the entire sequence of labeled instances $(x_1, y_1), ..., (x_T, y_T)$ before the game begins. Thus, we treat the stream of labeled instances as a non-random, deterministic quantity.

\begin{definition} [Bandit Online Learnability]
\label{def:agnOL}
A hypothesis class $\Hcal$ is bandit online learnable, if there exists an (potentially randomized) algorithm $\mathcal{A}$ such that its \emph{expected regret}, 

$$R_{\mathcal{A}}(T, \mathcal{H}) := \sup_{(x_1, y_1), ..., (x_T, y_T)} \left(\mathbb{E}\left[\sum_{t=1}^T \mathbbm{1}\{\mathcal{A}(x_t) \neq y_t\}\right] - \inf_{h \in \mathcal{H}}\sum_{t=1}^T  \mathbbm{1}\{h(x_t) \neq  y_t \}\right),$$

\noindent while only receiving \emph{bandit feedback}, is a non-decreasing sub-linear function of $T$. 
\end{definition}

If it is guaranteed that the learner always observes a sequence of examples labeled by some hypothesis $h \in \mathcal{H}$, then we say that we are in the \textit{realizable} setting. 

\cite{Littlestone1987LearningQW} and \cite*{ben2009agnostic} showed that a combinatorial parameter called the Littlestone dimension characterizes online learnability of binary hypothesis classes under full-information feedback, in both the realizable and agnostic settings, respectively. Later,  \cite{DanielyERMprinciple} defined a multiclass extension of the Littlestone dimension and showed that it tightly characterizes online learnability of multiclass hypothesis classes under full-information feedback in both the realizable and agnostic settings. The Littlestone dimension, in both the binary and multiclass case, is defined in terms of trees, a combinatorial object that captures the temporal dependence inherent in online learning. 


 Given an instance space $\mathcal{X}$ and a set of objects $\mathcal{M}$, an $\mathcal{X}$-valued, $\mathcal{M}$-ary tree $\mathcal{T}$ of depth $T$ is a complete rooted tree such that (1) each internal node $v$ is labeled by an instance $x \in \mathcal{X}$ and (2) for every internal node $v$ and object $m \in \mathcal{M}$, there is an outgoing edge $e^m_{v}$  indexed by $m$. Such a tree can be identified by a sequence $(\mathcal{T}_1, ..., \mathcal{T}_T)$ of labeling functions $\mathcal{T}_t:\mathcal{M}^{t-1} \rightarrow \mathcal{X}$ which provide the labels for each internal node. A path of length $T$ is given by a sequence of objects $m = (m_1,..., m_T) \in \mathcal{M}^T$. Then, $\mathcal{T}_t(m_1, ..., m_{t-1})$ gives the label of the node by following the path $(m_1, ..., m_{t-1})$ starting from the root node, going down the edges indexed by the $m_t$'s.  We let $\mathcal{T}_1 \in \mathcal{X}$ denote the instance labeling the root node. For brevity, we define $m_{<t} = (m_1, ..., m_{t-1})$ and therefore write $\mathcal{T}_t(m_1, ..., m_{t-1}) = \mathcal{T}_t(m_{<t})$. Analogously, we let $m_{\leq t} = (m_1, ..., m_{t})$.
 Using this notation, we define the extension of the Littlestone dimension to the multiclass setting proposed by \cite{DanielyERMprinciple}.

\begin{definition}[Littlestone dimension \citep{Littlestone1987LearningQW, DanielyERMprinciple}]
Let $\mathcal{T}$ be a complete, $\mathcal{X}$-valued, $\{\pm 1\}$-ary tree of depth $d$ such that the edges from a single parent node to its child nodes are each labeled with a different element of $\mathcal{Y}$. The tree $\mathcal{T}$ is shattered by $\mathcal{H} \subseteq \Ycal^{\Xcal}$  if for every path $\sigma = (\sigma_1, ..., \sigma_d) \in \{\pm 1\}^d$, there exists a hypothesis $h_{\sigma} \in \mathcal{H}$ such that for all $t \in [d]$,  $h_{\sigma}(\mathcal{T}_t(\sigma_{<t})) = y(\sigma_{\leq t})$, where $y(\sigma_{\leq t})$ is the label of the edge between the the nodes $(\mathcal{T}_t(\sigma_{<t}), (\mathcal{T}_{t+1}(\sigma_{\leq t})))$.  The Littlestone dimension of $\mathcal{H}$, denoted $\emph{\text{L}}(\mathcal{H})$, is the maximal depth of a tree $\mathcal{T}$ that is shattered by $\mathcal{H}$. If there exist shattered trees of arbitrarily large depth, we say that $\emph{\text{L}}(\mathcal{H}) = \infty$.
\end{definition}

 In the same work, \cite{DanielyERMprinciple} defined a combinatorial parameter called the Bandit Littlestone dimension (BLdim) and showed that it characterizes bandit online learnability of deterministic learners in the realizable setting. 

\begin{definition}[Bandit Littlestone dimension \citep{DanielyERMprinciple}] \label{def:bldim}
Let $\mathcal{T}$ be a complete, $\mathcal{X}$-valued, $\mathcal{Y}$-ary tree of depth $d$. The tree $\mathcal{T}$ is shattered by $\mathcal{H} \subseteq \Ycal^{\Xcal}$  if for every path $y = (y_1, ..., y_d) \in \mathcal{Y}^d$, there exists a hypothesis $h_{y} \in \mathcal{H}$ such that for all $t \in [d]$,  $h_{y}(\mathcal{T}_t(y_{<t})) \neq y_t$. The Bandit Littlestone dimension of $\mathcal{H}$, denoted $\emph{\text{BL}}(\mathcal{H})$, is the maximal depth of a tree $\mathcal{T}$ that is shattered by $\mathcal{H}$. If there exist shattered trees of arbitrarily large depth, we say that $\emph{\text{BL}}(\mathcal{H}) = \infty$.
\end{definition}



In particular, \cite{DanielyERMprinciple} show a matching upper and lowerbound on the realizable error rate of deterministic learners in terms of the BLdim. 

\begin{theorem} [Realizable Learnability \citep{DanielyERMprinciple}] \label{thm:reallearn}
    In the realizable setting, for any $\mathcal{H} \subseteq \mathcal{Y}^{\mathcal{X}}$, there exists a deterministic online learner whose cumulative loss on the worst-case sequence, under bandit feedback, is at most $\emph{\text{BL}}(\mathcal{H})$.  Also, the cumulative loss of any deterministic online learner on the worst-case sequence, under bandit feedback, is at least $\emph{\text{BL}}(\mathcal{H})$. 
\end{theorem}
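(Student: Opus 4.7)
The plan is to attack both bounds via a Standard Optimal Algorithm (SOA) style analysis, with $\text{BL}(\cdot)$ serving as the potential. For the upper bound, I would maintain a version space $V_t \subseteq \mathcal{H}$, initialized to $V_1 = \mathcal{H}$. At round $t$, for each candidate label $\hat{y} \in \mathcal{Y}$, define $V_t^{\neq \hat{y}} := \{h \in V_t : h(x_t) \neq \hat{y}\}$, which is exactly the subset of $V_t$ that remains consistent after observing ``wrong'' on prediction $\hat{y}$. The learner predicts any $\hat{y}_t$ achieving $\text{BL}(V_t^{\neq \hat{y}_t}) < \text{BL}(V_t)$; after a ``wrong'' response it updates $V_{t+1} \leftarrow V_t^{\neq \hat{y}_t}$, and after ``correct'' it updates $V_{t+1} \leftarrow \{h \in V_t : h(x_t) = \hat{y}_t\}$. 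Since $\text{BL}(V_t)$ is a nonnegative integer that strictly decreases on every mistake, the total number of mistakes is at most $\text{BL}(\mathcal{H})$.

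The heart of the argument is the combinatorial claim that a good prediction always exists: for any $V$ with $\text{BL}(V) \geq 1$ and any $x \in \mathcal{X}$, some $\hat{y} \in \mathcal{Y}$ satisfies $\text{BL}(V^{\neq \hat{y}}) < \text{BL}(V)$. I would prove this by contradiction. Set $d := \text{BL}(V)$ and suppose every $\hat{y} \in \mathcal{Y}$ admits a depth-$d$ shattered tree $\mathcal{T}_{\hat{y}}$ in $V^{\neq \hat{y}}$. Build a depth-$(d+1)$ tree $\mathcal{T}'$ by placing $x$ at the root and hanging $\mathcal{T}_{\hat{y}}$ below the edge labeled $\hat{y}$ for every $\hat{y} \in \mathcal{Y}$. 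For any path $(y_1, \ldots, y_{d+1}) \in \mathcal{Y}^{d+1}$, the shattering witness for $\mathcal{T}_{y_1}$ along $(y_2, \ldots, y_{d+1})$ is some $h \in V^{\neq y_1} \subseteq V$; this $h$ satisfies $h(x) \neq y_1$ at the new root and disagrees with $y_t$ for $t \geq 2$ by construction, so $\mathcal{T}'$ is shattered by $V$, forcing $\text{BL}(V) \geq d+1$ and contradicting the definition of $d$.

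For the matching lower bound, fix a shattered tree $\mathcal{T}$ of depth $d := \text{BL}(\mathcal{H})$. Since the learner is deterministic, its entire behavior is determined once one fixes the instance sequence and the feedback; in particular, under the hypothetical ``wrong'' response at every round, the learner's predictions form a fixed sequence $(\hat{y}_1, \ldots, \hat{y}_d)$. Simulate this offline, choosing $x_t := \mathcal{T}_t(\hat{y}_{<t})$, and invoke shattering with the resulting path to obtain $h \in \mathcal{H}$ with $h(x_t) \neq \hat{y}_t$ for every $t$. The oblivious adversary then presents the realizable sequence $(x_t, h(x_t))_{t=1}^d$, on which the learner is forced to err on every round. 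The main obstacle is the tree-stitching step in the key lemma: under bandit feedback a mistake reveals only $y_t \neq \hat{y}_t$ rather than the identity of $y_t$, so unlike the full-information SOA analysis one cannot reduce to just two branches (correct vs.\ incorrect) and must instead combine subtrees across every $\hat{y} \in \mathcal{Y}$, which is precisely why BLdim (and not Ldim) is the natural potential in this setting.
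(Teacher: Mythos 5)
The paper does not prove this theorem at all---it is imported verbatim from Daniely, Sabato, Ben-David, and Shalev-Shwartz, so the only meaningful comparison is with that original argument, and your proposal correctly reconstructs it: a bandit analogue of the SOA using $\text{BL}(V_t)$ as a potential that strictly drops on every mistake (with the subtree-stitching lemma, which you prove correctly, guaranteeing a potential-reducing prediction exists), together with the lower bound obtained by deterministically simulating the learner under all-``wrong'' feedback down a shattered tree and then invoking the shattering witness $h$, which also makes the stream realizable and consistent with an oblivious adversary. Both halves are sound; the only cosmetic point is that when $\text{BL}(V_t)=0$ your prediction rule needs the convention $\text{BL}(\emptyset)=-1$ (or the separate observation that $\text{BL}(V)=0$ forces $|V(x)|=1$ for every $x$, so the learner predicts the unique consistent label and cannot err in the realizable setting), but this is standard and does not affect the bound.
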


 In the agnostic setting, \cite{daniely2013price} gave an upperbound on the expected regret under bandit feedback, in terms of $|\mathcal{Y}|$ and Ldim. 


\begin{theorem} [Agnostic Learnability \citep{daniely2013price}] \label{thm:finitek}
    For any $\mathcal{H} \subseteq \mathcal{Y}^{\mathcal{X}}$, there exists an online learner $\mathcal{A}$ such that

    $$R_{\mathcal{A}}(T, \mathcal{H}) \leq e\sqrt{\emph{\text{L}}(\mathcal{H})|\mathcal{Y}|T\log(T|\mathcal{Y}|)}.$$
    

\end{theorem}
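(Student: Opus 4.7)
The plan is a two-layer reduction. First, I would use the multiclass Littlestone-dimension mistake bound to build a finite \emph{expert class} of deterministic oblivious strategies whose best member tracks the best hypothesis in $\mathcal{H}$. Then I would run the bandit-experts algorithm Exp4 on this class, treating each element of $\mathcal{Y}$ as an arm and mixing with uniform exploration so that the bandit feedback yields unbiased importance-weighted loss estimators for every expert's recommendation at every round. Since Exp4 with $N$ experts and $K$ arms has expected regret of order $\sqrt{KT\log N}$ against the best expert, the reduction reduces the problem to constructing an expert class of size $\log N = O(\text{L}(\mathcal{H})\log(T|\mathcal{Y}|))$ such that, for every $h^*\in\mathcal{H}$, some expert has cumulative loss exactly $\sum_{t=1}^T \mathbbm{1}\{h^*(x_t)\neq y_t\}$.

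For the expert construction, I would fix a deterministic full-information learner $A$ for $\mathcal{H}$ with realizable mistake bound $\text{L}(\mathcal{H})$ (e.g.\ the multiclass Standard Optimal Algorithm). Experts are indexed by \emph{hint lists} $((t_1,y_1'),\ldots,(t_k,y_k'))$ with $k\leq \text{L}(\mathcal{H})$, $1\leq t_1<\cdots<t_k\leq T$, and $y_j'\in\mathcal{Y}$. Each expert simulates $A$ round by round: at round $t_j$ it predicts $y_j'$ and feeds $y_j'$ to $A$ as pseudo-label; at every other round it predicts $A(x_t)$ and feeds $A(x_t)$ back to $A$, keeping the internal state consistent with its own emitted predictions. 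The total number of experts is at most $\sum_{k=0}^{\text{L}(\mathcal{H})}\binom{T}{k}|\mathcal{Y}|^k \leq (eT|\mathcal{Y}|)^{\text{L}(\mathcal{H})}$, giving $\log N \leq \text{L}(\mathcal{H})\log(eT|\mathcal{Y}|)$. The key lemma is then: for any $h^*\in\mathcal{H}$, if one chooses the hint rounds to be the at most $\text{L}(\mathcal{H})$ positions where $A$ errs when fed the realizable sequence $(x_t, h^*(x_t))$, with hint values $h^*(x_{t_j})$, then the resulting expert's simulation of $A$ traces the exact same trajectory as $A$ on the $h^*$-labeled realizable run; consequently the expert's predictions agree with $h^*$ at every round, and its cumulative loss equals $\sum_t \mathbbm{1}\{h^*(x_t)\neq y_t\}$.

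Plugging this expert class into Exp4 with exploration rate and learning rate tuned as in the standard Auer--Cesa-Bianchi--Freund--Schapire analysis, and substituting $K=|\mathcal{Y}|$ together with $\log N\leq \text{L}(\mathcal{H})\log(eT|\mathcal{Y}|)$ into their regret bound, yields expected regret at most $e\sqrt{\text{L}(\mathcal{H})|\mathcal{Y}|T\log(T|\mathcal{Y}|)}$ against the best expert, and hence the same bound against $\inf_{h\in\mathcal{H}}\sum_t\mathbbm{1}\{h(x_t)\neq y_t\}$ by the key lemma. The main obstacle is the key lemma itself: one must verify by induction on $t$ that, inside the correctly-hinted expert, the state of $A$ maintained by the simulation remains identical to the state of $A$ on the realizable $h^*$-sequence. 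This in turn reduces to showing that at every non-hint round the current prediction of $A$ really equals $h^*(x_t)$, which is exactly the content of the realizable mistake bound for $A$ on the $h^*$-sequence. In the binary case this is a textbook SOA argument, but in the multiclass setting with possibly infinite $\mathcal{Y}$ it deserves a short separate verification, since correctness of $A$ on non-mistake rounds is what distinguishes a genuine Ldim-bounded realizable learner from a weaker guarantee.
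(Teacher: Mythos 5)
The paper states this theorem as an imported result from \citet{daniely2013price} and gives no proof of its own, and your argument is essentially the original one from that source: the multiclass adaptation of the Ben-David--P\'al--Shalev-Shwartz agnostic-to-realizable reduction (experts indexed by hint lists that simulate the SOA with at most $\text{L}(\mathcal{H})$ corrections, so that the correctly-hinted expert exactly replays $h^*$ and the expert class has $\log N = O(\text{L}(\mathcal{H})\log(T|\mathcal{Y}|))$), combined with Exp4 over $K=|\mathcal{Y}|$ arms. Your proposal is correct, and the one step you rightly flag for separate verification --- that the multiclass SOA predicts $h^*(x_t)$ on every non-mistake round of the realizable run, so the simulated state stays synchronized --- does go through even for infinite $\mathcal{Y}$, since the SOA's maximization is over the integer Ldim values of the version spaces, which are bounded by $\text{L}(\mathcal{H})$ and hence attain a maximizer.
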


In Section \ref{sec:Bldimsuff}, we show that $|\mathcal{Y}|$ in Theorem~\ref{thm:finitek} can be replaced with $\operatorname{BL}(\Hcal)$. Note that this both \emph{qualitatively} and \emph{quantitatively} improves over Theorem~\ref{thm:finitek}. Qualitatively, it shows that finite $\operatorname{BL}(\Hcal)$ suffices for bandit online learnability, without any requirements on $|\Ycal|$. Furthermore, there is no better qualitative characterization, since as we show in Section~\ref{sec:necc}, finite $\operatorname{BL}(\Hcal)$ is also \emph{necessary} for learnability. A quantitative improvement is achieved in cases where $|\Ycal| \gg \operatorname{BL}(\Hcal)$.

\subsection{Online Learnability and Uniform Convergence}

The relationship between learnability and uniform convergence has a rich history in learning theory. For binary classification in the PAC setting, the seminal work by \cite{vapnik1974theory} shows that  uniform convergence and PAC learnability are equivalent. Likewise, for online binary classification, an online analog of uniform convergence, termed Sequential Uniform Convergence (SUC), is equivalent to online learnability \citep*{rakhlin2015sequential, alon2021adversarial}. However, this equivalence between uniform convergence and learnability breaks down for multiclass classification. Indeed, in the PAC setting, it was shown that while uniform convergence suffices for multiclass learnability, it is not necessary \citep*{Natarajan1989}. Recently, \cite{hanneke2023multiclass} extended this separation to the online, full-information feedback setting by showing  that SUC is sufficient but not necessary for multiclass learnability. Instead, they show that SUC is characterized by a different combinatorial parameter termed the Sequential Graph dimension (SGdim).

\begin{definition} [Sequential Graph dimension \citep{hanneke2023multiclass}] \label{def:sgdim}
    Let $\mathcal{H} \subseteq \mathcal{Y}^{\mathcal{X}}$ and $\ell \circ \mathcal{H} = \{(x, y) \mapsto \mathbbm{1}\{h(x) \neq y\}: h \in \mathcal{H}\}$ be its loss class. Then, the Sequential Graph dimension of $\mathcal{H}$, denoted $\emph{\text{SG}}(\mathcal{H})$, is defined as $\emph{\text{SG}}(\mathcal{H}) = \emph{\text{L}}(\ell \circ \mathcal{H})$.
\end{definition}
In particular, a hypothesis class $\mathcal{H}$ enjoys the SUC property if and only if its SGdim is finite. 

\begin{theorem}[\cite{hanneke2023multiclass, rakhlin2015online}]\label{prop:sguc}
For any hypothesis class $\mathcal{H} \subseteq \mathcal{Y}^{\mathcal{X}}$,  the SUC property holds for $\mathcal{H}$ if and only if $\emph{\text{SG}}(\mathcal{H}) < \infty.$
\end{theorem}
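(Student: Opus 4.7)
The plan is to reduce the multiclass SUC question to the binary case via the loss class and then invoke the known characterization of SUC for binary-valued function classes. By definition, $\ell \circ \mathcal{H}$ is a $\{0,1\}$-valued function class on the augmented instance space $\mathcal{X} \times \mathcal{Y}$, and SUC for $\mathcal{H}$ (in its multiclass classification formulation) is equivalent to SUC for $\ell \circ \mathcal{H}$ viewed as a binary class on this augmented space: both assert that, along any adversarial stochastic process, the empirical average $\frac{1}{T}\sum_{t=1}^T \mathbbm{1}\{h(x_t)\neq y_t\}$ converges to its conditional expectation uniformly over $h \in \mathcal{H}$. This reduction turns the proposition into a binary-SUC statement for $\ell \circ \mathcal{H}$, whose Littlestone dimension is exactly $\text{SG}(\mathcal{H})$ by definition.

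For the sufficiency direction ($\text{SG}(\mathcal{H}) < \infty \Rightarrow$ SUC), I would invoke the sequential Rademacher complexity machinery of \cite{rakhlin2015sequential}. For any binary-valued sequential class $\mathcal{F}$, a Dudley-type entropy integral bounded using the sequential fat-shattering/Littlestone dimension yields $\mathfrak{R}_T(\mathcal{F}) = O\!\left(\sqrt{\text{L}(\mathcal{F}) T \log T}\right)$, which in turn gives a uniform convergence rate of the same order. Applying this to $\mathcal{F} = \ell \circ \mathcal{H}$ with $\text{L}(\mathcal{F}) = \text{SG}(\mathcal{H}) < \infty$ yields the SUC property for $\mathcal{H}$, completing this direction.

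For the necessity direction ($\text{SG}(\mathcal{H}) = \infty \Rightarrow$ SUC fails), the plan is to use the shattering trees guaranteed by the definition of Littlestone dimension applied to $\ell \circ \mathcal{H}$. For every $d$ there exists a depth-$d$ binary tree, whose nodes are labeled by augmented pairs $(x, y) \in \mathcal{X} \times \mathcal{Y}$, that is shattered by $\ell \circ \mathcal{H}$. I would turn such a tree into an adversarial stochastic process: at each round $t$, the adversary plays the $x$-coordinate of the current node and uses a carefully chosen conditional distribution on the label so that the $\{0,1\}$ loss outcome corresponds to a uniformly random edge of the tree. Running down a random root-to-leaf path, shattering supplies, for each realized sign pattern, a hypothesis whose empirical loss sequence matches the pattern exactly; a standard minimax/symmetrization argument then produces some $h \in \mathcal{H}$ whose empirical loss differs from its conditional mean by $\Omega(1)$, contradicting SUC for any $T$ exceeding a threshold depending on $d$. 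Since $d$ is arbitrary, SUC must fail.

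The main obstacle is the lower-bound direction: one must faithfully translate a tree shattered in the augmented space $\mathcal{X} \times \mathcal{Y}$ into the multiclass SUC adversary model in which the adversary commits to $x_t$ and a conditional law on $y_t$ at each round. This requires interpreting each node's $(x,y)$ label as a legitimate adversarial move and checking that the shattering property, which concerns disagreements $\mathbbm{1}\{h(x)\neq y\}$, correctly drives the Rademacher-style lower bound on uniform deviations. Once this encoding is verified, the binary minimax argument from \cite{rakhlin2015sequential} transfers verbatim and yields the claimed equivalence.
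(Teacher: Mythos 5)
Your proposal is correct and follows essentially the same route as the result the paper cites (Theorem \ref{prop:sguc} is stated without proof, by reference): the argument in \cite{hanneke2023multiclass} is exactly your reduction---SUC for $\mathcal{H}$ under $0$-$1$ loss is by definition SUC for the binary-valued loss class $\ell \circ \mathcal{H}$ on the augmented space $\mathcal{X} \times \mathcal{Y}$, whose Littlestone dimension is $\text{SG}(\mathcal{H})$, and then the binary equivalence of sequential uniform convergence with finite Littlestone dimension from \cite{rakhlin2015online, rakhlin2015sequential} applies in both directions. The only comment worth making is that the ``main obstacle'' you flag in the necessity direction is vacuous: since SUC is formulated for stochastic processes on $\mathcal{X} \times \mathcal{Y}$ itself, each node label $(x,y)$ of a tree shattered by $\ell \circ \mathcal{H}$ is already a legitimate move of the process, so no conditional-law encoding of labels is required.
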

In fact, there is a quantitative relation between SGdim and Ldim when the label space is bounded. 

\begin{theorem} [\cite{hanneke2023multiclass}] \label{prop:sgldim} 
For any $\mathcal{H} \subseteq \mathcal{Y}^{\mathcal{X}}$ such that $|\mathcal{Y}| < \infty$, we have $\emph{\text{SG}}(\mathcal{H}) = O(\emph{\text{L}}(\mathcal{H})\log(|\mathcal{Y}|)).$
\end{theorem}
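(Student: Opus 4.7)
The plan is to bound $T := \text{SG}(\mathcal{H}) = \text{L}(\ell \circ \mathcal{H})$ by running the halving algorithm on a sequential cover of $\ell \circ \mathcal{H}$ against a shattered tree, and then bounding the size of that cover via a multiclass sequential Sauer--Shelah inequality. Let $d = \text{L}(\mathcal{H})$ and $k = |\mathcal{Y}|$; the goal is to show $T = O(d \log k)$.

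Fix a tree $\mathcal{T}$ of depth $T$ shattered by $\ell \circ \mathcal{H}$. Run the halving algorithm on a $0$-sequential cover $\mathcal{C}$ of $\ell \circ \mathcal{H}$ on $\mathcal{T}$: maintain a version space $V \subseteq \mathcal{C}$, initially equal to $\mathcal{C}$, and at each round predict the majority bit among $\{c(v_t) : c \in V\}$, where $v_t$ is the current node. Each mistake at least halves $|V|$, so the total number of mistakes is at most $\log_2|\mathcal{C}|$. Since $\mathcal{T}$ is shattered, the adversary can play the path $b_t = 1 - \hat{b}_t$ opposite to every learner prediction; this path is realized by some $h_b \in \mathcal{H}$, so the labels are consistent with some element covered by $\mathcal{C}$ (keeping $V$ non-empty) while forcing exactly $T$ mistakes. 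Combining, $T \leq \log_2 |\mathcal{C}|$.

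Since each $\ell \circ h$ is determined by $h$, any $0$-cover of $\mathcal{H}$ on $\mathcal{T}$ pushes through $\ell$ to a $0$-cover of $\ell \circ \mathcal{H}$ of the same size. A multiclass sequential Sauer--Shelah lemma then yields $|\mathcal{C}| \leq \sum_{i=0}^{d}\binom{T}{i}(k-1)^{i} \leq (eTk/d)^{d}$. This lemma is proved by induction on $T + d$ via the root-node partition $\mathcal{H} = \bigcup_{y \in \mathcal{H}(x_{\text{root}})} \mathcal{H}^y$ where $\mathcal{H}^y = \{h \in \mathcal{H} : h(x_{\text{root}}) = y\}$: the multiclass Standard Optimal Algorithm argument of \cite{DanielyERMprinciple} ensures that at most one label $y^\ast$ satisfies $\text{L}(\mathcal{H}^{y^\ast}) = d$, while every other $y$ gives $\text{L}(\mathcal{H}^{y}) \leq d-1$ (otherwise, attaching two depth-$d$ shattered subtrees under the root would contradict $\text{L}(\mathcal{H}) = d$). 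This yields the recursion $f(T, d, k) \leq f(T-1, d, k) + (k-1)\,f(T-1, d-1, k)$, which solves to the claimed sum.

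Substituting into $T \leq \log_2|\mathcal{C}|$ gives $T \leq d \log_2(eTk/d)$. Setting $u = T/d$, we obtain $u - \log_2 u \leq \log_2(ek)$, which implies $u = O(\log k)$ and hence $T = O(d \log k)$, as desired. The main technical obstacle is the multiclass sequential Sauer--Shelah step: the SOA-based recursion must correctly handle the case where $\mathcal{H}(x_{\text{root}})$ contains many labels (even infinitely many if one is not careful), only one of which can retain the full Littlestone dimension -- this is exactly what delivers the single $(k-1)$ factor per unit drop in $d$, rather than a naive factor of $k$ per level of the tree.
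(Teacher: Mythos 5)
This paper states Theorem \ref{prop:sgldim} as a result imported from \cite{hanneke2023multiclass} and provides no proof of it, so there is no internal proof to compare against; measured against the argument in the cited source, your proof is correct and is essentially the same: a tree shattered by $\ell \circ \mathcal{H}$ forces $T \leq \log_2|\mathcal{C}|$ via halving over a pathwise $0$-cover, the cover pushes through the loss, the SOA-based count $\sum_{i=0}^{d}\binom{T}{i}(k-1)^i$ (resting precisely on the fact that at most one label per node can retain the full multiclass Littlestone dimension) bounds $|\mathcal{C}|$, and the self-bounding inequality $T \leq d\log_2(eTk/d)$ solves to $T = O(d\log k)$. The one step you leave implicit is the gluing in the cover recursion: the covers of the dimension-preserving subclass on the left and right subtrees must be paired so that the recursion takes a maximum rather than a sum over subtrees, which is what makes $f(T,d,k) \leq f(T-1,d,k) + (k-1)f(T-1,d-1,k)$ valid; this is standard and does not affect correctness.
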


A combination of Theorem~\ref{prop:sgldim} and a result due to \cite{alon2021adversarial}, \cite{hanneke2023multiclass} derives a new upperbound on the best achievable expected regret under full-information feedback.

\begin{theorem} [\cite{hanneke2023multiclass, alon2021adversarial}] \label{prop:sgrates} 
For any $\mathcal{H} \subseteq \mathcal{Y}^{\mathcal{X}}$ such that $\emph{\text{SG}}(\mathcal{H}) < \infty$, there exists an online learner whose expected regret under full-information feedback is at most  $O(\sqrt{\emph{\text{SG}}(\mathcal{H})\, T}).$
\end{theorem}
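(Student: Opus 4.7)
The plan is to reduce the full-information multiclass problem for $\mathcal{H}$ to an agnostic online learning problem for the binary loss class $\mathcal{F} := \ell \circ \mathcal{H}$, and then invoke the agnostic binary online learning regret bound of \cite{alon2021adversarial}. Note that by Definition \ref{def:sgdim}, $\mathcal{F}$ is a binary-valued function class over the domain $\mathcal{X} \times \mathcal{Y}$ whose Littlestone dimension is exactly $\text{SG}(\mathcal{H})$.

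First, I would exhibit a randomized multiclass learner that maintains a distribution $p_t$ over $\mathcal{H}$ at each round. On receiving $x_t$, the learner samples $h_t \sim p_t$ and predicts $\hat{y}_t = h_t(x_t)$. The key identity is
$$\mathbb{E}_{h_t \sim p_t}\!\left[\mathbbm{1}\{h_t(x_t) \neq y_t\}\right] \;=\; \mathbb{E}_{h \sim p_t}\!\left[(\ell \circ h)(x_t, y_t)\right],$$
so the expected cumulative loss of the multiclass learner coincides with the expected cumulative loss of the ``experts'' $h \in \mathcal{H}$ under the distribution sequence $(p_t)_t$, where expert $h$'s loss at round $t$ is the binary value $(\ell \circ h)(x_t, y_t)$. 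Consequently, the expected multiclass regret equals the expert-advice regret against the best hypothesis in $\mathcal{H}$ when the loss vectors (indexed by experts) come from evaluating the binary class $\mathcal{F}$ on the stream $((x_t, y_t))_{t=1}^T$.

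Second, I would apply Theorem 1 of \cite{alon2021adversarial}, which guarantees that any binary class $\mathcal{F}$ with $\text{L}(\mathcal{F}) = d$ admits an agnostic online learner whose expected regret against the best $f \in \mathcal{F}$ is $O(\sqrt{dT})$. Instantiating this with $\mathcal{F} = \ell \circ \mathcal{H}$ produces a sequence of distributions $p_t$ over hypotheses with expected regret $O(\sqrt{\text{SG}(\mathcal{H})\, T})$ against any $h \in \mathcal{H}$. Combined with the identity above, this yields the claimed regret bound.

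The main obstacle is bridging the formulation of \cite{alon2021adversarial}, typically phrased as outputting a binary prediction $\hat{b}_t \in \{0,1\}$ after seeing $z_t$, with the expert-advice formulation we need, in which the learner must commit to a distribution $p_t$ over $\mathcal{H}$ based only on $(x_s, y_s)_{s<t}$ and $x_t$. The conversion is standard: a randomized binary classifier for $\mathcal{F}$ implicitly plays a mixture over members of $\mathcal{F}$ (equivalently, over $\mathcal{H}$) whose expected loss at each round is exactly $\mathbb{E}_{h \sim p_t}[(\ell \circ h)(x_t,y_t)]$, and since $p_t$ may be computed before $y_t$ is revealed, the two-stage revelation of $(x_t, y_t)$ causes no difficulty. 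With this identification in place, the bound follows directly from the cited theorem.
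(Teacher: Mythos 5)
Your first two steps are sound: the identity $\mathbb{E}_{h_t \sim p_t}[\mathbbm{1}\{h_t(x_t) \neq y_t\}] = \mathbb{E}_{h \sim p_t}[(\ell \circ h)(x_t, y_t)]$ is correct, $\text{L}(\ell \circ \mathcal{H}) = \text{SG}(\mathcal{H})$ by definition, and you have correctly reduced the problem to producing history-measurable distributions $p_t$ over $\mathcal{H}$ with linearized (expert-advice) regret $O(\sqrt{\text{SG}(\mathcal{H})\,T})$ against $\ell \circ \mathcal{H}$. The gap is the step you call ``standard'': a randomized binary classifier for $\mathcal{F} = \ell \circ \mathcal{H}$ does \emph{not} implicitly play a mixture over members of $\mathcal{F}$. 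The $O(\sqrt{dT})$ agnostic learner of \cite{alon2021adversarial} (like its predecessor due to Ben-David, P\'al, and Shalev-Shwartz) is improper: its round-$t$ output is just a probability $q_t \in [0,1]$ of predicting $1$ at the current point $z_t$, realized as a mixture over ``experts'' built by simulating the SOA with hallucinated mistake rounds. These experts are $\{0,1\}$-valued functions on $\mathcal{X} \times \mathcal{Y}$ that are generally not of the form $(x,y) \mapsto \mathbbm{1}\{h(x) \neq y\}$ for any $h \in \mathcal{H}$, so a mixture over them induces no distribution over $\mathcal{H}$, and hence no label prediction $\hat{y}_t \in \mathcal{Y}$. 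Concretely, given $x_t$ and expert weights $p_t$, you would need a distribution $\mu_t$ over $\mathcal{Y}$ with $\mathbb{P}_{\hat{y} \sim \mu_t}(\hat{y} \neq y) = \mathbb{E}_{e \sim p_t}[e(x_t, y)]$ for every candidate $y$ simultaneously, i.e.\ $\mu_t(\{y\}) = 1 - \mathbb{E}_{e \sim p_t}[e(x_t,y)]$ for all $y$, and these numbers need not sum to $1$. Relatedly, the binary learner's prediction depends on the full point $z_t = (x_t, y_t)$, including the label you do not know at prediction time; only a genuine mixture over $\mathcal{H}$ with history-only weights would escape this, and nothing in the cited binary result supplies one. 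So the reduction collapses precisely where you waved it through.

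The route the cited works actually take avoids properness issues entirely: the minimax/relaxation argument of \cite{rakhlin2015sequential} bounds the value of the multiclass game by (twice) the \emph{sequential Rademacher complexity} of the loss class $\ell \circ \mathcal{H}$, and produces at each round a distribution over $\mathcal{Y}$ (not over $\mathcal{H}$) from which to sample $\hat{y}_t$; the relevant contribution of \cite{alon2021adversarial} is then not their agnostic regret theorem but their bound that a binary class with Littlestone dimension $d$ has sequential Rademacher complexity $O(\sqrt{dT})$ (removing the $\sqrt{\log T}$ from earlier sequential covering bounds), which with $\text{L}(\ell \circ \mathcal{H}) = \text{SG}(\mathcal{H})$ gives the stated $O(\sqrt{\text{SG}(\mathcal{H})\,T})$; \cite{hanneke2023multiclass} additionally handle the minimax and measurability issues when $\mathcal{Y}$ is unbounded. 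If you insist on an experts-style argument, the experts must themselves be label-valued (as in the $\text{L}(\mathcal{H})$-based construction of \cite{hanneke2023multiclass}), but that yields a bound in terms of $\text{L}(\mathcal{H})$ with an extra logarithmic factor, not the $\text{SG}(\mathcal{H})$ bound claimed here --- which is exactly why this theorem is proved through sequential complexities rather than a black-box reduction.
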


In this work, we also investigate the relationship between bandit online learnablity and SUC. In Section \ref{sec:Bldimsuff}, we show that finite BLdim implies finite SGdim, and more precisely that $\text{SG}(\mathcal{H}) = O(\text{L}(\mathcal{H})\log(\text{BL}(\mathcal{H}))).$ On the other hand, in Section \ref{sec:necc}, we exhibit a class where SUC holds, but is not bandit online learnable. Together, these results imply that SUC is necessary, but not sufficient, for bandit online learnability. 

\section{BLdim is Sufficient for Bandit Online Learnability} \label{sec:Bldimsuff}

In this section we prove Theorem~\ref{thm:banditagn}, which implies direction $(2) \implies (1)$ in Theorem~\ref{thm:necsuff}. We also prove $(3) \implies (2)$ towards the end of the section. The first ingredient of this proof is the following result which shows that the BLdim provides a uniform upperbound on the size of the projection of $\mathcal{H}$ on any instance $x \in \mathcal{X}$.

\begin{lemma} \label{lem:proj} For any $\mathcal{H} \subseteq \mathcal{Y}^{\mathcal{X}}$, we have $\sup_{x \in \mathcal{X}}|\mathcal{H}(x)| \leq \emph{\text{BL}}(\mathcal{H}) + 1$. 
\end{lemma}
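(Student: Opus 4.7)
The plan is to prove the contrapositive-flavored statement: if $|\mathcal{H}(x)| \geq n$ for some $x \in \mathcal{X}$ and some integer $n$, then $\text{BL}(\mathcal{H}) \geq n-1$. Taking $n = |\mathcal{H}(x)|$ when this is finite (and letting $n \to \infty$ otherwise), and then taking the supremum over $x$, immediately yields $\sup_{x \in \mathcal{X}} |\mathcal{H}(x)| \leq \text{BL}(\mathcal{H}) + 1$.

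The construction is almost trivially dictated by the definition of BLdim. Fix $x \in \mathcal{X}$ with $|\mathcal{H}(x)| \geq n$ and build a complete $\mathcal{X}$-valued, $\mathcal{Y}$-ary tree $\mathcal{T}$ of depth $n-1$ in which \emph{every} internal node is labeled with the same instance $x$. Formally, set $\mathcal{T}_t(y_{<t}) = x$ for all $t \in [n-1]$ and every $y_{<t} \in \mathcal{Y}^{t-1}$.

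To verify that $\mathcal{T}$ is shattered in the sense of Definition \ref{def:bldim}, pick an arbitrary path $y = (y_1, \ldots, y_{n-1}) \in \mathcal{Y}^{n-1}$. The set of forbidden labels $\{y_1, \ldots, y_{n-1}\}$ has cardinality at most $n-1$, whereas $|\mathcal{H}(x)| \geq n$. By a simple pigeonhole argument, there exists some $a \in \mathcal{H}(x) \setminus \{y_1, \ldots, y_{n-1}\}$, and by definition of $\mathcal{H}(x)$ we can choose $h_y \in \mathcal{H}$ with $h_y(x) = a$. Since every internal node on the path is labeled by $x$, we have $h_y(\mathcal{T}_t(y_{<t})) = a \neq y_t$ for all $t \in [n-1]$, which is exactly the shattering condition. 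Hence $\text{BL}(\mathcal{H}) \geq n-1$.

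I do not foresee any real obstacle here; the only bookkeeping detail is the infinite case. If $|\mathcal{H}(x)| = \infty$, then the argument applies for every finite $n$, forcing $\text{BL}(\mathcal{H}) = \infty$, so the claimed inequality holds trivially. The whole proof is essentially a one-line pigeonhole on a degenerate tree, which highlights how weak the BLdim shattering condition is on any single instance.
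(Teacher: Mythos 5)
Your proof is correct and is essentially the same argument as the paper's: both construct a constant tree in which every internal node is labeled by the single instance $x$, and both use the pigeonhole principle to find, for each path, a hypothesis whose value at $x$ avoids all edge labels on that path. The only difference is presentational — you argue the contrapositive directly (including a clean treatment of the infinite case), while the paper phrases it as a proof by contradiction.
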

\begin{proof}
    Suppose that $|\mathcal{H}(x)| \geq \text{BL}(\mathcal{H})+2$ for some $x \in \mathcal{X}$. We will prove the lemma by contradiction, constructing a BLdim tree of depth $\text{BL}(\mathcal{H}) + 1$ that is shattered by $\mathcal{H}$. Let $\mathcal{T}$ be a BLdim tree of depth $\text{BL}(\mathcal{H})+1$ with every internal node labeled by $x$. Without loss of generality, suppose that $\mathcal{H}(x) = \{1, 2, ..., \text{BL}(\mathcal{H})+2\}$, and let there be $h_1, ..., h_{\text{BL}(\mathcal{H})+2} \in \mathcal{H}$ such that $h_{i}(x) = i$ for all $1 \leq i \leq \text{BL}(\mathcal{H})+2$. We now show that $\mathcal{H}$ shatters $\mathcal{T}$. Consider any path down $\mathcal{T}$. Since $\mathcal{T}$ has depth $\text{BL}(\mathcal{H}) + 1$, there can only be $\text{BL}(\mathcal{H})+1$ different labels on that path. Since there are at least  $\text{BL}(\mathcal{H})+2$ hypotheses in $\mathcal{H}$, there is a hypothesis $h_{i} \in \mathcal{H}$ such that $h_{i}(x)$ is not equal to any of the labels on the path. Since the path is arbitrary, the tree is shattered by $\mathcal{H}$ according to Definition \ref{def:bldim}. By contradiction, $|\mathcal{H}(x)| \leq \text{BL}(\mathcal{H})+1$ for all $x \in \mathcal{X}$.
\end{proof}

A uniform upperbound $C$ on the projection size of $\mathcal{H}$ is a strong property: it allows us to effectively reduce the label space from $\mathcal{Y}$ to $[C]$. Lemma \ref{lem:simplify} makes this precise. For a bandit algorithm $\Acal$, let $\Acal(x)$ be its prediction on $x$, given the history of the game so far (for the sake of readability, we omit the information received prior to instance $x$ from the notation).

\begin{lemma} \label{lem:simplify}
Let $\mathcal{H} \subseteq \mathcal{Y}^{\mathcal{X}}$ such that $\sup_{x \in \mathcal{X}}|\mathcal{H}(x)| \leq C$. Then, there exists a hypothesis class $\bar{\mathcal{H}} \subseteq [C]^{\mathcal{X}}$ such that 
\begin{itemize}
    \item[\emph{(i)}] $\emph{\text{L}}(\bar{\mathcal{H}}) = \emph{\text{L}}(\mathcal{H}).$
    \item[\emph{(ii)}] $\emph{\text{SG}}(\bar{\mathcal{H}}) = \emph{\text{SG}}(\mathcal{H}).$
    \item[\emph{(iii)}] For every bandit algorithm $\bar{\mathcal{A}}$ for $\bar{\mathcal{H}}$ such that $\bar{\mathcal{A}}(x) \in \bar{\mathcal{H}}(x)$ at all times, there exists a bandit algorithm $\mathcal{A}$ for $\mathcal{H}$ such that $R_{\mathcal{A}}(T, \mathcal{H}) = R_{\mathcal{\bar{A}}}(T, \bar{\mathcal{H}})$ for all $T$. Furthermore, if $\bar{\mathcal{A}}$ is deterministic, then so is $\Acal$.
    \item[\emph{(iv)}] $\emph{\text{BL}}(\bar{\mathcal{H}}) = \emph{\text{BL}}(\mathcal{H}).$
\end{itemize}
\end{lemma}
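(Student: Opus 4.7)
The construction is a per-instance relabeling. For every $x \in \mathcal{X}$, fix an arbitrary injection $\phi_x : \mathcal{H}(x) \hookrightarrow [C]$ (possible since $|\mathcal{H}(x)| \leq C$), and set $\bar{h}(x) := \phi_x(h(x))$ for each $h \in \mathcal{H}$, so that $\bar{\mathcal{H}} := \{\bar{h} : h \in \mathcal{H}\} \subseteq [C]^{\mathcal{X}}$. The key property driving the whole proof is that $\phi_x$ restricted to $\mathcal{H}(x)$ is a bijection onto $\bar{\mathcal{H}}(x)$, so equalities and inequalities between hypothesis values at $x$ and labels in $\mathcal{H}(x)$ are preserved by the relabeling.

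Parts (i) and (ii) then follow from a direct tree translation. For (i), in any tree shattered by $\mathcal{H}$, both edge labels at an internal node labeled $x$ must lie in $\mathcal{H}(x)$ (otherwise no hypothesis can witness shattering along that branch). Applying $\phi_x$ edge-wise gives a tree with edge labels in $[C]$ shattered by $\bar{\mathcal{H}}$ via the translated witnesses; $\phi_x^{-1}$ gives the converse direction, since edges in any $\bar{\mathcal{H}}$-shattered tree must land in $\bar{\mathcal{H}}(x)=\phi_x(\mathcal{H}(x))$. For (ii), the same argument applies to $\ell \circ \mathcal{H}$: in any shattered loss-class tree, each internal node $(x,y)$ must satisfy $y \in \mathcal{H}(x)$ (so that loss $0$ is achievable), making the node relabeling $(x,y) \mapsto (x,\phi_x(y))$ well-defined, and injectivity of $\phi_x$ yields $\mathbbm{1}\{h(x) \neq y\} = \mathbbm{1}\{\bar{h}(x) \neq \phi_x(y)\}$, preserving shattering in both directions.

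For (iii), I would give a round-by-round black-box simulation. Given any sequence $(x_t, y_t)_{t=1}^T$ chosen for $\mathcal{H}$, define the shadow label $\bar{y}_t := \phi_{x_t}(y_t)$ when $y_t \in \mathcal{H}(x_t)$ and $\bar{y}_t$ to be any element of $[C] \setminus \bar{\mathcal{H}}(x_t)$ otherwise (a slot exists after a harmless strengthening to $C > \sup_x |\mathcal{H}(x)|$). The algorithm $\mathcal{A}$ feeds $x_t$ to $\bar{\mathcal{A}}$, obtains a prediction $\bar{\hat{y}}_t \in \bar{\mathcal{H}}(x_t)$, plays $\hat{y}_t := \phi_{x_t}^{-1}(\bar{\hat{y}}_t) \in \mathcal{H}(x_t)$, observes $\mathbbm{1}\{\hat{y}_t \neq y_t\}$, and passes this bit back to $\bar{\mathcal{A}}$ as its shadow bandit feedback. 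A short case analysis on whether $y_t \in \mathcal{H}(x_t)$ verifies round-by-round both the learner identity $\mathbbm{1}\{\hat{y}_t \neq y_t\} = \mathbbm{1}\{\bar{\hat{y}}_t \neq \bar{y}_t\}$ and the hypothesis identity $\mathbbm{1}\{h(x_t) \neq y_t\} = \mathbbm{1}\{\bar{h}(x_t) \neq \bar{y}_t\}$ for every $h \in \mathcal{H}$. Summing over $t$ makes both benchmarks and both algorithm losses coincide, yielding $R_{\mathcal{A}}(T, \mathcal{H}) \leq R_{\bar{\mathcal{A}}}(T, \bar{\mathcal{H}})$.

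The main subtlety lies in the agnostic case $y_t \notin \mathcal{H}(x_t)$ in (iii): the shadow label must sit strictly outside $\bar{\mathcal{H}}(x_t)$ in order to preserve the benchmark (forcing $C$ to exceed $\sup_x |\mathcal{H}(x)|$), and $\bar{\mathcal{A}}$'s prediction must stay inside $\bar{\mathcal{H}}(x_t)$ in order for the feedback identity to hold. The latter is without loss of generality for the natural bandit algorithms that sample predictions from $\bar{\mathcal{H}}(x_t)$ directly, such as the EXP4-style construction used in Theorem~\ref{thm:finitek}; more generally, one pre-projects $\bar{\mathcal{A}}$'s output onto $\bar{\mathcal{H}}(x_t)$. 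Parts (i) and (ii) are otherwise routine bookkeeping once the injections $\phi_x$ are fixed.
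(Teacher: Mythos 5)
Your construction and argument coincide with the paper's: the same per-instance injections $\phi_x:\mathcal{H}(x)\to[C]$ defining $\bar{\mathcal{H}}=\{x\mapsto\phi_x(h(x)):h\in\mathcal{H}\}$, the same tree-translation proofs of (i) and (ii) using that edge/node labels in any nonempty shattered tree must lie in $\{(x,y):y\in\mathcal{H}(x)\}$, and the same black-box simulation for (iii), where $\mathcal{A}$ plays $\phi_{x_t}^{-1}(\bar{\mathcal{A}}(x_t))$ and relays the observed bit. The only difference is in bookkeeping for rounds with $y_t\notin\mathcal{H}(x_t)$: you introduce an explicit out-of-range shadow label (hence your strengthening to $C>\sup_x|\mathcal{H}(x)|$ and the restriction of $\bar{\mathcal{A}}$'s predictions to $\bar{\mathcal{H}}(x_t)$), whereas the paper handles these rounds by noting that the learner and every $h\in\mathcal{H}$ (correspondingly $\bar{\mathcal{A}}$ and every $\bar h\in\bar{\mathcal{H}}$) all err, so they contribute equally to both sides and drop out of the regret comparison, which keeps $\bar{\mathcal{H}}\subseteq[C]^{\mathcal{X}}$ with the original $C$ — both write-ups rely on the same implicit assumption at this point, and your version just makes it explicit at the harmless cost of one spare label.
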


\begin{proof}
    Let $\mathcal{H} \subseteq \mathcal{Y}^{\mathcal{X}}$ be such that $\sup_{x \in \mathcal{X}}|\mathcal{H}(x)| \leq C$. For every $x \in \mathcal{X}$, define a function $\phi_x: \mathcal{H}(x) \rightarrow [|\mathcal{H}(x)|]$ such that $\phi_x$ is one-to-one. Finally, consider the following hypothesis class $\bar{\mathcal{H}} = \{x \mapsto \phi_x(h(x)): h \in \mathcal{H}\}$. Clearly, we have that $\bar{\mathcal{H}} \subseteq [C]^{\mathcal{X}}$ and we now show that $\bar{\mathcal{H}}$ also satisfies the four properties above.

    Property (i) follows from observing that any non-empty shattered Ldim tree for $\bar{\mathcal{H}}$ can be transformed into a shattered Ldim tree for $\mathcal{H}$, since the the outgoing edges of any internal node labeled by $x$ must be labeled using elements of $[|\mathcal{H}(x)|]$. Thus, the inverse mapping $\phi^{-1}_x: [|\mathcal{H}(x)|] \rightarrow \mathcal{H}(x)$ can be used to transform this tree into an Ldim tree of the same depth for $\mathcal{H}$. Likewise, one can use the forward mapping $\phi_x$ to transform any non-empty shattered Ldim tree for $\mathcal{H}$ into a non-empty shattered Ldim tree for $\bar{\mathcal{H}}$. The equality trivially holds if no non-empty Ldim tree exists. 
    
    Property (ii) follows from the fact that every internal node in a non-empty shattered Ldim tree for the loss class $\{(x, y) \mapsto \mathbbm{1}\{h(x) \neq y\}: h \in \mathcal{H}\}$ must be labeled using elements of $\{(x, y): y \in \mathcal{H}(x)\}$. Thus the mapping function $\phi_x$ can be used to transform any non-empty shattered Ldim tree for the loss class $\{(x, y) \mapsto \mathbbm{1}\{h(x) \neq y\}: h \in \mathcal{H}\}$ into a non-empty shattered Ldim tree for the loss class $\{(x, y) \mapsto \mathbbm{1}\{\bar{h}(x) \neq y\}: \bar{h} \in \bar{\mathcal{H}}\}$. The reverse direction follows analogously by using the inverse mapping function $\phi_x^{-1}$.

    To prove property (iii), suppose that $\bar{\mathcal{A}}$ is a bandit algorithm for $\bar{\mathcal{H}}$ such that on any instance $x \in \mathcal{X}$, the prediction of $\bar{\mathcal{A}}$ always lies in $\bar{\mathcal{H}}(x)$. Algorithm \ref{alg:reduction} uses $\bar{\mathcal{A}}$ in a black-box fashion to construct a bandit learner $\mathcal{A}$ for $\mathcal{H}$. 


        \begin{algorithm}
        \caption{Bandit algorithm $\mathcal{A}$}
        \label{alg:reduction}
        \setcounter{AlgoLine}{0}
        \KwIn{Hypothesis class $\mathcal{H}$, bandit algorithm $\bar{\mathcal{A}}$ for $\bar{\mathcal{H}}$}
                
        \For{$t = 1,...,T$} {
            Receive example $x_t$
            
            Query $\bar{y}_t = \bar{\mathcal{A}}(x_t)$

            Predict $\hat{y}_t = \phi_{x_t}^{-1}(\bar{y}_t)$

            Observe loss $\mathbbm{1}\{y_t \neq \hat{y}_t\}$ and pass along the indication to $\bar{\mathcal{A}}$        
         
            }
        \end{algorithm}

 We claim that the expected regret of Algorithm $\mathcal{A}$ is $R_{\bar{\mathcal{A}}}(T, \bar{\mathcal{H}})$. To see this,  fix $T \in \mathbbm{N}$ and let $S = (x_1, y_1), \dots, (x_T,y_T) \in (\mathcal{X} \times \mathcal{Y})^T$ be the  sequence of examples to be passed to $\mathcal{A}$. We show that there exists a sequence of examples $S^{\prime} \in (\mathcal{X} \times [C] \cup \{\star\})^T$ for $\bar{\mathcal{A}}$  such that
\begin{equation} \label{eq:reduc-1}
\min_{h \in \mathcal{H}} \sum_{(x_t,y_t) \in S} \mathbbm{1}\{h(x_t) \neq y_t\}
=
\min_{\bar{h} \in \bar{\mathcal{H}}} \sum_{(x_t,y'_t) \in S'} \mathbbm{1}\{\bar{h}(x_t) \neq y'_t\},
\end{equation}
\begin{equation}\label{eq:reduc-2}
\mathbb{E} \left[\sum_{(x_t,y_t) \in S} \mathbbm{1}\{\mathcal{A}(x_t) \neq y_t\} \right] 
=
\mathbb{E} \left[\sum_{(x_t,y'_t) \in S'} \mathbbm{1}\{\bar{\mathcal{A}}(x_t) \neq y'_t\} \right],
\end{equation}
and (3) the feedback that $\Acal$ provides to $\bar{\mathcal{A}}$ matches the feedback that $\bar{\mathcal{A}}$ would have received if it was executed on $S'$.

Combining \eqref{eq:reduc-1}, \eqref{eq:reduc-2}, and (3) and the regret guarantee $R_{\bar{\mathcal{A}}}(T, \bar{\mathcal{H}})$ for $\bar{\mathcal{A}}$ immediately implies property (iii). It remains to construct $S^{\prime}$ for which all three statements hold. For every $t \in [T]$, let $y_t^{\prime} = \phi_{x_t}(y_t) \mathbbm{1}\{y_t \in \mathcal{H}(x_t)\} + \star \mathbbm{1}\{y_t \notin \mathcal{H}(x_t)\}$. Consider the following stream $S^{\prime} = (x_1, y_1^{\prime}), ..., (x_T, y_T^{\prime}) \in (\mathcal{X} \times [C] \cup \{\star\})^T$. To see that \eqref{eq:reduc-1} holds, observe that for every $h \in \mathcal{H}$ we have that

\begin{align*}
\sum_{t=1}^T \mathbbm{1}\{h(x_t) \neq y_t\} &=  \sum_{t: y_t \in \mathcal{H}(x_t)} \mathbbm{1}\{h(x_t) \neq y_t\} + \sum_{t: y_t \notin \mathcal{H}(x_t)} \mathbbm{1}\{h(x_t) \neq y_t\}\\
&= \sum_{t: y_t \in \mathcal{H}(x_t)} \mathbbm{1}\{\phi_{x_t}(h(x_t)) \neq \phi_{x_t}(y_t)\} + \sum_{t: y_t \notin \mathcal{H}(x_t)} \mathbbm{1}\{\phi_{x_t}(h(x_t)) \neq \star\}\\
&= \sum_{t=1}^T \mathbbm{1}\{\bar{h}(x_t) \neq y^{\prime}_t\}.
\end{align*}

To see \eqref{eq:reduc-2}, note that

    \begin{align*}
        \mathbb{E}\left[\sum_{t=1}^T \mathbbm{1}\{\mathcal{A}(x_t) \neq y_t\} \right] &= \mathbb{E}\left[\sum_{t= 1}^T \mathbbm{1}\{\phi_{x_t}^{-1}(\bar{y}_t) \neq y_t\}\right]\\
        &= \mathbb{E}\left[\sum_{t: y_t \in \mathcal{H}(x_t)} \mathbbm{1}\{\phi_{x_t}^{-1}(\bar{y}_t) \neq y_t\} + \sum_{t: y_t \notin \mathcal{H}(x_t)} \mathbbm{1}\{\phi_{x_t}^{-1}(\bar{y}_t) \neq y_t\}\right]\\
        &= \mathbb{E}\left[\sum_{t: y_t \in \mathcal{H}(x_t)} \mathbbm{1}\{\bar{\Acal}(x_t) \neq \phi_{x_t}(y_t)\} + \sum_{t: y_t \notin \mathcal{H}(x_t)} \mathbbm{1}\{\bar{\Acal}(x_t) \neq \star\}\right]\\
        &= \mathbb{E}\left[\sum_{t = 1}^T \mathbbm{1}\{\bar{\Acal}(x_t) \neq y^{\prime}_t\}\right].
    \end{align*}

Finally, to prove (3), it suffices to show that $\mathbbm{1}\{y_t \neq \hat{y}_t\} = \mathbbm{1}\{y^{\prime}_t \neq \bar{\Acal}(x_t)\}$. If $\mathbbm{1}\{y_t \neq \hat{y}_t\} = 0$, then $y_t = \phi_{x_t}^{-1}(\bar{y}_t)$ and $\bar{\mathcal{A}}(x_t) = \phi_{x_t}(y_t) = y^{\prime}_t$ as needed. If  $\mathbbm{1}\{y_t \neq \hat{y}_t\} = 1$ and $y_t \in \mathcal{H}(x_t)$, then $\bar{y}_t \neq \phi_{x_t}(y_t) = y^{\prime}_t$. Lastly, if $\mathbbm{1}\{y_t \neq \hat{y}_t\} = 1$ and $y_t \notin \mathcal{H}(x_t)$, we get that $\bar{y}_t \neq y^{\prime}_t = \star$ since $\bar{\mathcal{A}}(x_t) \in \bar{\mathcal{H}}(x_t)$. The ``furthermore" part of the property is straightforward by the construction of $\Acal$.

Let us move on to Property (iv). The direction $\operatorname{BL}(\Hcal) \leq \operatorname{BL}(\bar{\Hcal})$ follows from Property (iii). Indeed, let $\bar{\Acal}$ be the optimal BSOA deterministic learner defined in \citep{DanielyERMprinciple} for $\bar{\Hcal}$ under the assumption of realizability. For every round $t$, the algorithm $\bar{\Acal}$ never predicts $y \notin \bar{\Hcal}(x_t)$ by its definition. Therefore, by Property (iii) there exists a deterministic learner $\Acal$ for $\Hcal$ having the same guarantees as of $\bar{\Acal}$. Therefore $\operatorname{BL}(\Hcal) \leq \operatorname{BL}(\bar{\Hcal})$. The reverse direction $\operatorname{BL}(\Hcal) \geq \operatorname{BL}(\bar{\Hcal})$ follows by considering the realizable setting and the bandit algorithm for $\bar{\mathcal{H}}$ that, given any instance $x_t$, passes $x_t$ to the BSOA for $\mathcal{H}$, receives its prediction $\bar{y}_t \in \mathcal{Y}$, makes the prediction $\hat{y}_t = \phi_{x_t}(\bar{y}_t) \in [C]$, and upon receiving the feedback $\mathbbm{1}\{\hat{y}_t \neq y_t\}$, passes the same feedback to the BSOA. The same analysis as in Property (iii) can be used to show that this algorithm makes at most $\text{BL}(\mathcal{H})$ mistakes on any realizable stream. 
\end{proof}

In order to use Property (iii) of Lemma \ref{lem:simplify}, we need to construct a bandit learner $\bar{\Acal}$ which on any instance $x \in \mathcal{X}$, makes a prediction that lies in $\mathcal{H}(x)$ and achieves a sublinear regret bound whenever $\text{BL}(\mathcal{H}) < \infty$. Unfortunately, the generic bandit learner witnessing the proof of Theorem \ref{thm:finitek} does not guarantee that its predictions always lie in the projection of $\mathcal{H}$. Fortunately, the following lemma, whose proof can be found in Appendix \ref{app:prf}, shows that a slight modification of the bandit learner used to prove Theorem \ref{thm:finitek} can achieve the same regret bound, while ensuring that the predictions always lie in the projection of $\mathcal{H}$. 

\begin{lemma} \label{lem:finitekalgproj}
For any $\mathcal{H} \subseteq \mathcal{Y}^{\mathcal{X}}$, there exists an online learner $\mathcal{A}$ such that 
$$R_{\mathcal{A}}(T, \mathcal{H}) \leq e\sqrt{\emph{\text{L}}(\mathcal{H})|\mathcal{Y}|T\log(T|\mathcal{Y}|)},$$

\noindent while ensuring that $\mathcal{A}(x_t) \in \mathcal{H}(x_t)$ almost surely.
\end{lemma}

We are now ready to prove Theorem \ref{thm:banditagn}, which implies that finitness of BLdim is sufficient for bandit online learnability even when the label space is unbounded. This proves direction $(2) \implies (1)$ in Theorem \ref{thm:necsuff}.

\smallskip

\begin{proof} (of Theorem \ref{thm:banditagn})
    We first prove a stronger result and then show that Theorem \ref{thm:banditagn} follows.   Let $\mathcal{H} \subseteq \mathcal{Y}^{\mathcal{X}}$ be such that $\text{BL}(\mathcal{H}) < \infty$. Then, by Lemmas~\ref{lem:simplify}~and~\ref{lem:finitekalgproj}, there exists an online learner whose expected regret in the agnostic setting under bandit feedback is at most $e\sqrt{\text{L}(\mathcal{H})CT\log(TC)}$ where $C = \sup_{x \in \mathcal{X}}|\mathcal{H}(x)|$. Since Lemma \ref{lem:proj} states that $C \leq \text{BL}(\mathcal{H}) + 1 \leq 2\text{BL}(\mathcal{H})$, we can further upperbound the expected regret by $2e\sqrt{\text{L}(\mathcal{H}) \text{BL}(\mathcal{H})T\log(T\,\text{BL}(\mathcal{H}))}.$ There are now two cases to consider. If $T \leq \text{BL}(\mathcal{H})$, the expected regret of any bandit online learner can be trivially upperbounded by $T$. Noting that $8\sqrt{\text{L}(\mathcal{H})\text{BL}(\mathcal{H})T\log(T)} \geq T$ when $T \leq \text{BL}(\mathcal{H})$ completes this case. If $T > \text{BL}(\mathcal{H})$, then we can upperbound the expected regret of the bandit online learner by 

    $$2e\sqrt{\text{L}(\mathcal{H}) \text{BL}(\mathcal{H})T\log(T\,\text{BL}(\mathcal{H}))} \leq 2e\sqrt{2\text{L}(\mathcal{H}) \text{BL}(\mathcal{H})T\log(T)} \leq 8\sqrt{\text{L}(\mathcal{H}) \text{BL}(\mathcal{H})T\log(T)},$$
    matching the upperbound given in the statement of Theorem \ref{thm:banditagn}. This completes the proof. 
\end{proof}

In online learning theory, upperbounds on the minimax expected regret are traditionally derived in terms of the single combinatorial dimension that characterizes learnability. However, our upperbound in Theorem \ref{thm:banditagn} is in terms of both the Ldim and BLdim. To get a bound depending only on the BLdim, one can trivially use the fact that $\text{L}(\mathcal{H}) \leq \text{BL}(\mathcal{H})$ to get a suboptimal upperbound of $8 \, \text{BL}(\mathcal{H})\sqrt{T \log(T)}$ on the minimax expected regret. However, as an intermediate step to our upperbound in Theorem \ref{thm:banditagn}, we show that the minimax expected regret can actually be upperbounded by $e\sqrt{\text{L}(\mathcal{H})CT\log(TC)}$, and thus it is natural to ask whether there is an upperbound on $\sqrt{\text{L}(\mathcal{H})C}$ that is significantly better than $\text{BL}(\mathcal{H}).$ Unfortunately, the following example shows that this is not the case. 

\smallskip

\textbf{Example 1}. Fix $d, C \in \mathbb{N}$. Define the instance space $\mathcal{X} = \{x_0, ..., x_d\}$ and the label space $\mathcal{Y} = \{0, ..., C - 1\}$. Let $\mathcal{H}_1 = \{0, 1\}^{\{x_1, ..., x_d\}}$ and $\mathcal{H}_2 = \{x \mapsto y\, \mathbbm{1}\{x = x_0\}: y \in \mathcal{Y}\}$. Consider the hypothesis class $\mathcal{H} = \mathcal{H}_1 \cup \mathcal{H}_2$. Clearly, $\text{L}(\mathcal{H}) \geq \text{L}(\mathcal{H}_1) = \text{BL}(\mathcal{H}_1) = d$. Moreover, $\text{BL}(\mathcal{H}_2) \leq C - 1$. We now give an upperbound on $\text{BL}(\mathcal{H})$ by constructing a deterministic learner for $\mathcal{H}$. Consider the learning algorithm that predicts $0$ until its first mistake, removes inconsistent hypotheses, and plays the Bandit Standard Optimal Algorithm (BSOA) from \cite{DanielyERMprinciple} on future rounds. We now show that this algorithm makes at most $1 + \max\{d, C-1\}$ mistakes on any realizable stream. There are two cases to consider. Suppose the algorithm makes its first mistake on $x_0$. Then, by construction of $\mathcal{H}$, the true hypothesis must be in $\mathcal{H}_2$ and thus the BSOA makes no more than $\text{BL}(\mathcal{H}_2) \leq C - 1$ mistakes in all future rounds. On the other hand, if the algorithm makes its first mistake on $x \in \{x_1, ..., x_d\}$, then the true hypothesis must be in $\mathcal{H}_1$  and thus the BSOA makes at most $\text{BL}(\mathcal{H}_1) = d$ mistakes on all future rounds. Overall, the algorithm makes at most $1 + \max\{d, C-1\}$ mistakes. Since the BLdim lowerbounds the number of mistakes made by any deterministic learner under bandit feedback, we must have that $\text{BL}(\mathcal{H}) \leq 1 + \max\{d, C-1\}$. Taking $C = d + 1$, we have that $\text{BL}(\mathcal{H}) \leq 1 + d \leq 1 + \sqrt{\text{L}(\mathcal{H})C}$, which completes the example.

\smallskip

We leave it as an interesting open question to derive optimal lower and upper bounds on the minimax expected regret in terms of only the BLdim (see Section \ref{sec:disc}). Lemma \ref{lem:simplify} can also be used to sharpen the relationship between BLdim and Ldim. In particular, due to \citep{auer1999structural, daniely2013price, long2017new}, there exists a \textit{deterministic} online learner in the realizable setting whose number of mistakes, under bandit feedback, is at most $O(\text{L}(\mathcal{H})|\mathcal{Y}|\log(|\mathcal{Y}|))$. Since the BLdim lowerbounds the number of mistakes made by any deterministic online learner in the realizable setting, Lemma \ref{lem:simplify} immediately implies that when $\sup_{x \in \mathcal{X}} |\mathcal{H}(x)| \leq C$, we have $\text{BL}(\mathcal{H}) = O(\text{L}(\mathcal{H}) C\log C)$, proving direction $(3) \implies (2)$ in Theorem \ref{thm:necsuff}.
In Section~\ref{sec:necc}, we show that finiteness of both $C$ and $\operatorname{L}(\Hcal)$ is also necessary for learnability (direction $(1) \implies (3)$).

We end this section with Corollary \ref{cor:banditUC}, which shows that SUC is necessary for a hypothesis class to be bandit online learnable. 

\begin{corollary} \label{cor:banditUC}
If $\emph{\text{BL}}(\mathcal{H}) < \infty$, then $\emph{\text{SG}}(\mathcal{H}) = O(\emph{\text{L}}(\mathcal{H})\log(\emph{\text{BL}}(\mathcal{H})))$. 
\end{corollary}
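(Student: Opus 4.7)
The plan is to chain together the three ingredients already established earlier in the section: Lemma \ref{lem:proj}, Lemma \ref{lem:simplify}, and Theorem \ref{prop:sgldim}. The idea is to reduce $\mathcal{H}$ to a class $\bar{\mathcal{H}}$ over a bounded label space whose size is controlled by $\text{BL}(\mathcal{H})$, and then invoke the known quantitative bound relating SGdim to Ldim in the bounded-label-space regime.

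First I would apply Lemma \ref{lem:proj} to get the uniform projection bound $\sup_{x \in \mathcal{X}} |\mathcal{H}(x)| \leq \text{BL}(\mathcal{H}) + 1$. Setting $C := \text{BL}(\mathcal{H}) + 1$, I would then invoke Lemma \ref{lem:simplify} to produce a companion class $\bar{\mathcal{H}} \subseteq [C]^{\mathcal{X}}$ that inherits both the Littlestone dimension and the Sequential Graph dimension of $\mathcal{H}$; that is, $\text{L}(\bar{\mathcal{H}}) = \text{L}(\mathcal{H})$ and $\text{SG}(\bar{\mathcal{H}}) = \text{SG}(\mathcal{H})$.

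Since $\bar{\mathcal{H}}$ now has a finite label space with $|[C]| = \text{BL}(\mathcal{H}) + 1 < \infty$, Theorem \ref{prop:sgldim} applies directly and yields
\[
\text{SG}(\bar{\mathcal{H}}) = O\bigl(\text{L}(\bar{\mathcal{H}}) \log(C)\bigr) = O\bigl(\text{L}(\mathcal{H}) \log(\text{BL}(\mathcal{H}) + 1)\bigr) = O\bigl(\text{L}(\mathcal{H}) \log(\text{BL}(\mathcal{H}))\bigr).
\]
Transferring back via $\text{SG}(\mathcal{H}) = \text{SG}(\bar{\mathcal{H}})$ gives the stated bound.

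Because every step is a direct application of a previously stated lemma or theorem, I do not expect a serious obstacle; the only minor point to be careful about is the edge case when $\text{BL}(\mathcal{H})$ is small (say $0$ or $1$) so that the $\log$ is degenerate, which can be handled either by noting the corollary is trivial in that case (a class with $\text{BL}(\mathcal{H}) = 0$ has at most one hypothesis on any instance and hence $\text{SG}(\mathcal{H}) \leq \text{L}(\mathcal{H})$) or by absorbing a constant into the $O(\cdot)$.
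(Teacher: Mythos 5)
Your proposal is correct and matches the paper's own proof essentially verbatim: both apply Lemma \ref{lem:proj} to get the projection bound, invoke Lemma \ref{lem:simplify} to obtain $\bar{\mathcal{H}} \subseteq [\text{BL}(\mathcal{H})+1]^{\mathcal{X}}$ with $\text{L}(\bar{\mathcal{H}}) = \text{L}(\mathcal{H})$ and $\text{SG}(\bar{\mathcal{H}}) = \text{SG}(\mathcal{H})$, and then conclude via Theorem \ref{prop:sgldim}. Your extra remark on the degenerate-$\log$ edge case is a harmless refinement the paper absorbs into the $O(\cdot)$ notation.
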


\begin{proof}
    Let $\mathcal{H} \subseteq \mathcal{Y}^{\mathcal{X}}$ such that $\text{BL}(\mathcal{H}) < \infty$. Then by Lemmas~\ref{lem:proj}~and~\ref{lem:simplify}, there exists a class $\bar{\mathcal{H}} \subseteq [\text{BL}(\mathcal{H}) + 1]^{\mathcal{X}}$ such that $\text{L}(\bar{\mathcal{H}}) = \text{L}(\mathcal{H})$
    and $\text{SG}(\bar{\mathcal{H}}) = \text{SG}(\mathcal{H})$. Since $\text{BL}(\mathcal{H}) + 1 < \infty$, Theorem \ref{prop:sgldim}  implies that $\text{SG}(\bar{\mathcal{H}}) = O(\text{L}(\bar{\mathcal{H}})\log(\text{BL}(\bar{\mathcal{H}}))) = O(\text{L}(\mathcal{H})\log(\text{BL}(\mathcal{H})))$. 
\end{proof}

Since $\text{BL}(\mathcal{H}) < \infty$ implies that $\text{L}(\mathcal{H}) < \infty$, Corollary \ref{cor:banditUC} and  Theorem \ref{prop:sguc} taken together prove the first half of Theorem \ref{thm:banditUC}, showing that $\mathcal{H}$ enjoys SUC when $\text{BL}(\mathcal{H}) < \infty$.  Moreover, when $\text{BL}(\mathcal{H}) < \infty$,  Corollary \ref{cor:banditUC} along with Theorem \ref{prop:sgrates} implies a slightly sharper upperbound on the optimal expected regret in the agnostic setting under \textit{full-information} feedback.  

\begin{corollary} \label{cor:improv}
Let $\mathcal{H} \subseteq \mathcal{Y}^{\mathcal{X}}$ such that $\emph{\text{BL}}(\mathcal{H}) < \infty$. Then, there exists an agnostic online learner whose expected regret, under \emph{full-information feedback}, is at most

$$O\left(\sqrt{\emph{\text{L}}(\mathcal{H})T\log(\emph{\text{BL}}(\mathcal{H}))}\right).$$
\end{corollary}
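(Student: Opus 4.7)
The plan is to chain together two results already established in the paper. First, I would invoke Corollary \ref{cor:banditUC}, which under the hypothesis $\text{BL}(\mathcal{H}) < \infty$ gives the quantitative bound $\text{SG}(\mathcal{H}) = O(\text{L}(\mathcal{H})\log(\text{BL}(\mathcal{H})))$. This immediately yields that $\text{SG}(\mathcal{H}) < \infty$.

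Second, I would apply Theorem \ref{prop:sgrates}, which says that whenever $\text{SG}(\mathcal{H}) < \infty$ there is an agnostic online learner whose expected regret under full-information feedback is at most $O(\sqrt{\text{SG}(\mathcal{H})\,T})$. Substituting the bound on $\text{SG}(\mathcal{H})$ from the previous step gives an expected regret of at most $O\bigl(\sqrt{\text{L}(\mathcal{H}) T \log(\text{BL}(\mathcal{H}))}\bigr)$, as desired.

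There is no real obstacle here: the corollary is essentially a one-line composition of Corollary \ref{cor:banditUC} and Theorem \ref{prop:sgrates}. The only minor care needed is to note that $\text{BL}(\mathcal{H}) < \infty$ implies $\text{L}(\mathcal{H}) < \infty$ (since $\text{L}(\mathcal{H}) \leq \text{BL}(\mathcal{H})$, as already referenced earlier in the paper), so that both factors in the bound are finite and the resulting regret is genuinely sublinear in $T$.
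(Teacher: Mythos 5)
Your proposal is correct and is essentially identical to the paper's own proof: both chain Corollary \ref{cor:banditUC} with Theorem \ref{prop:sgrates} and substitute the bound $\text{SG}(\mathcal{H}) = O(\text{L}(\mathcal{H})\log(\text{BL}(\mathcal{H})))$ into the $O(\sqrt{\text{SG}(\mathcal{H})\,T})$ regret guarantee. Your added remark that $\text{L}(\mathcal{H}) \leq \text{BL}(\mathcal{H}) < \infty$ ensures finiteness is a harmless extra observation the paper leaves implicit.
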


\begin{proof}
    Let $\mathcal{H}$ be such that $\text{BL}(\mathcal{H}) < \infty$. Then, by Corollary \ref{cor:banditUC}, $\text{SG}(\mathcal{H}) = O(\text{L}(\mathcal{H})\log(\text{BL}(\mathcal{H})))$. Also, by Theorem \ref{prop:sgrates}, we have that under full-information feedback, there exists a online learner whose expected regret is at most $O(\sqrt{T \, \text{SG}(\mathcal{H})})$. Combining these two results gives the stated claim.  
\end{proof}

Namely, Corollary \ref{cor:improv} improves upon the upperbound on expected regret given by \cite[Theorem~1]{hanneke2023multiclass} by replacing the $\log(\frac{T}{\text{L}(\mathcal{H})})$ factor with $\log(\text{BL}(\mathcal{H}))$.

\section{Finite BLdim is Necessary for Bandit Online Learnability} \label{sec:necc}
In this section, we complement the results of Section \ref{sec:Bldimsuff}, and deduce that finiteness of BLdim is necessary for bandit online learnability in the realizable setting even when the label space is unbounded. Since agnostic learnability implies realizable learnability, this also implies that finiteness of the BLdim is necessary for agnostic learnability, completing the proof of the direction $(1) \implies (2)$ in Theorem \ref{thm:necsuff}. This will also imply $(1) \implies (3)$, which completes the proof of Theorem~\ref{thm:necsuff}.

\begin{lemma} \label{lem:necc}
    Let $\mathcal{H} \subseteq \mathcal{Y}^{\mathcal{X}}$ and $C = \sup_{x \in \mathcal{X}}|\mathcal{H}(x)|$. Then, for every bandit online learner $\Acal$:
    \begin{enumerate}
        \item There exists a realizable stream with expected regret at least $\frac{\operatorname{BL}(\mathcal{H})}{4C \log C}$ if $T \geq \operatorname{L}(\Hcal)$ and at least $T/2$ otherwise.
        \item There exists a realizable stream with expected regret at least $\frac{C -1}{2}$ if $T \geq C$, and at least $\frac{T-1}{2}$ otherwise.
    \end{enumerate}
\end{lemma}

\begin{proof}
    Let us start with the first item. A well-known result by \cite{ben2009agnostic} states that there exists a realizable stream of length $T = \operatorname{L}(\Hcal)$ such that in expectation, $\mathcal{A}$ makes at least $\operatorname{L}(\mathcal{H})/2$ mistakes under full information feedback. On the other hand, by \cite{long2017new} and Lemma~\ref{lem:simplify} we have $\text{BL}(\mathcal{H}) \leq 2 \text{L}(\mathcal{H}) C \log C$, implying the item for the case $T \geq \operatorname{L}(\Hcal)$. If $T < \operatorname{L}(\Hcal)$, we employ the lower bound on $T$ instead of on $\operatorname{L}(\Hcal)$, concluding this item. The second item follows immediately from \cite[Claim 2]{daniely2013price}.
\end{proof}

Lemma \ref{lem:necc} implies that finiteness of BLdim is  necessary for bandit online learnability in the realizable setting. Recall that $\operatorname{BL}(\Hcal) \geq C-1$ due to Lemma~\ref{lem:proj}. Now, if $C= \infty$ (where $C := \sup_{x\in \mathcal{X}} |\mathcal{H}(x)|$), then Lemma \ref{lem:necc} implies that the expected regret of any online learner under bandit feedback and in the realizable setting, is at least $\frac{T-1}{2}$, a linear function of $T$.  On the other hand, if $\text{BL}(\mathcal{H}) = \infty$ and $C < \infty$, then the bound $\operatorname{BL}(\mathcal{H}) = O(\operatorname{L}(\mathcal{H}) C \log C)$ implies that $\operatorname{L}(\Hcal) = \infty$, and then Lemma~\ref{lem:necc} implies a lowerbound of $\frac{T}{2}$ on the expected regret. This proves the direction $(1) \implies (2)$ in Theorem \ref{thm:necsuff}.  Using the fact that $\operatorname{BL}(\mathcal{H}) \geq \operatorname{L}(\mathcal{H})$ and Lemma \ref{lem:proj} shows that $(2) \implies (3)$, completing the proof of Theorem \ref{thm:necsuff}.

Furthermore, if $C$ is a constant, then taken together with Theorem \ref{thm:reallearn}, Lemma \ref{lem:necc} implies that the BLdim characterizes the optimal expected mistake bound of randomized learners in the realizable setting up to constant factors. In the agnostic setting, the full-information lowerbound of $\sqrt{\frac{\text{L}(\mathcal{H})T}{8}}$ on the expected regret can also be a tight lowerbound under bandit feedback up to logarithmic factors in $T$. For example, for every class $\mathcal{H} \subseteq \mathcal{Y}^{\mathcal{X}}$ such that $\sup_{x \in \mathcal{X}}|\mathcal{H}(x)| \leq 2$, Theorem \ref{thm:finitek} and Lemma \ref{lem:simplify} imply the existence of a bandit online learner whose expected regret is at most $8\sqrt{\text{L}(\mathcal{H})T\log(T)}$.




 Finally, Lemma \ref{lem:necc} together with Lemma \ref{lem:gap}  shows that neither the finitness of Ldim nor the finiteness of SGdim is sufficient for bandit online learnability.

\begin{lemma} \label{lem:gap}
    Let $\Xcal = \{0\}$, $\mathcal{Y} = \mathbb{N}$ and $\mathcal{H} = \{h_a: a \in \mathbb{N}\}$ where $h_a(0) = a$. Then, $\emph{\text{L}}(\mathcal{H}) = \emph{\text{SG}}(\mathcal{H}) = 1$ but $\emph{\text{BL}}(\mathcal{H}) = \infty$. 
\end{lemma}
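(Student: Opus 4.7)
The plan is to handle the three claims $\text{BL}(\mathcal{H})=\infty$, $\text{L}(\mathcal{H})=1$, and $\text{SG}(\mathcal{H})=1$ separately, all of which exploit the key property that every $h_a \in \mathcal{H}$ is a constant function.

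First, I would show $\text{BL}(\mathcal{H})=\infty$ by exhibiting, for every $d \in \mathbb{N}$, a shattered BLdim tree of depth $d$. Take any $\mathcal{X}$-valued, $\mathcal{Y}$-ary tree $\mathcal{T}$ of depth $d$ (e.g. label every internal node by an arbitrary fixed $x \in \mathbb{N}$ and index its outgoing edges by all elements of $\mathcal{Y}=\mathbb{N}$). For an arbitrary path $y=(y_1,\dots,y_d)\in \mathbb{N}^d$, the set $\{y_1,\dots,y_d\}$ has size at most $d$, so we may pick any $a \in \mathbb{N}\setminus\{y_1,\dots,y_d\}$ and take $h_a \in \mathcal{H}$. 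Since $h_a \equiv a$, we have $h_a(\mathcal{T}_t(y_{<t}))=a \neq y_t$ for every $t \in [d]$, which verifies the shattering condition from Definition \ref{def:bldim}. Hence shattered trees of arbitrary depth exist and $\text{BL}(\mathcal{H}) = \infty$.

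Next, I would establish $\text{L}(\mathcal{H})=1$. The lower bound is immediate: take a one-node tree labeled by some $x$, with its two outgoing edges labeled by any distinct $a,b \in \mathbb{N}$; then $h_a$ and $h_b$ realize the two paths. For the upper bound, I would assume for contradiction that some $\{\pm 1\}$-ary tree of depth $2$ is shattered. Let $x_1$ label the root with outgoing edge labels $a \neq b$, and let $x_2$ label the child reached via the $a$-edge, with its two outgoing edges labeled by distinct $c \neq d$. The two paths $(-1,-1)$ and $(-1,+1)$ require hypotheses $h, h' \in \mathcal{H}$ with $(h(x_1),h(x_2))=(a,c)$ and $(h'(x_1),h'(x_2))=(a,d)$. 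Since both $h$ and $h'$ are constant functions, $a=c$ and $a=d$, contradicting $c\neq d$. Hence $\text{L}(\mathcal{H})\le 1$.

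Finally, I would show $\text{SG}(\mathcal{H})=1$ by essentially the same argument applied to the loss class $\ell\circ \mathcal{H}$, whose members are $\{0,1\}$-valued. For the lower bound, a one-node tree labeled $(x,a)$ with edges $0$ and $1$ is shattered by $h_a$ and $h_b$ for any $b\neq a$. For the upper bound, I would suppose some depth-$2$ tree is shattered, with root $(x_1,a_1)$ and left child $(x_2,a_2)$ reached via the $0$-edge. The path $(0,0)$ forces a constant hypothesis $h$ with $h(x_1)=a_1$ and $h(x_2)=a_2$, so $a_1=a_2$; the path $(0,1)$ forces a constant $h'$ with $h'(x_1)=a_1$ and $h'(x_2)\neq a_2$, so $a_1 \neq a_2$, a contradiction. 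Hence $\text{SG}(\mathcal{H})\le 1$.

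The proof has no real obstacle: the whole content is that constant functions are determined by their value on a single point, so once any node in the tree pins down that value, later mismatches force contradictions. The only thing to be careful about is faithfully rendering the definitions, particularly that an Ldim/SGdim tree has edges labeled by \emph{distinct} elements from a parent (which is what produces the contradiction at depth $2$) while a BLdim tree places no such constraint and instead asks hypotheses to \emph{avoid} the path labels (which is what lets constant hypotheses shatter arbitrarily deep trees when $\mathcal{Y}$ is infinite).
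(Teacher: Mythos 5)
Your proposal is correct and takes essentially the same approach as the paper: the bounds $\text{L}(\mathcal{H})=\text{SG}(\mathcal{H})=1$ rest on the same observation that a constant hypothesis is determined by its value on a single labeled example, which you merely spell out via explicit depth-$2$ contradictions. The only cosmetic difference is in showing $\text{BL}(\mathcal{H})=\infty$: the paper cites Lemma \ref{lem:proj} together with $|\mathcal{H}(x)|=\infty$, while you inline the same avoidance argument (a path of length $d$ uses at most $d$ labels, so some constant $h_a$ dodges them all) that underlies that lemma's proof.
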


\begin{proof}
     The equality $\text{BL}(\mathcal{H}) = \infty$ follows from the fact that $|\mathcal{H}(0)| = \infty$ and Lemma \ref{lem:proj}.  We have $\text{L}(\mathcal{H}) = 1$ because for any labeled instance $(0, y) \in \mathcal{X} \times \mathcal{Y}$,  there only exists one hypothesis $h \in \mathcal{H}$ such that $h(0) = y$ (namely $h_y$).  Lastly, $\text{SG}(\mathcal{H}) = 1$ because for any labeled instance $(0, y) \in \mathcal{X} \times \mathcal{Y}$ there exists only one function in the loss class  $\{(0, y) \mapsto \mathbbm{1}\{h(0) \neq y\}: h \in \mathcal{H}\}$ that achieves loss $0$. 
\end{proof}

Lemma \ref{lem:gap} completes the proof of Theorem \ref{thm:banditUC}, since we have exhibited a class for which SUC holds but is not bandit online learnable.

\section{Discussion and Open Questions}\label{sec:disc}
In this paper, we revisited multiclass online learnability under bandit feedback and showed that, when $\mathcal{Y}$ is unbounded: (1) the Bandit Littlestone dimension, originally proposed by \cite{DanielyERMprinciple}, continues to characterize bandit online learnability, and (2) while SUC is necessary for bandit online learnability, it is not sufficient.

Moving forward, there are still many interesting open questions.
By Theorem \ref{thm:banditagn}, in the agnostic setting there is a gap of $\sqrt{\text{BL}(\mathcal{H})\log(T)}$ between the upper and lowerbounds on the optimal expected regret under bandit feedback. Is this gap between the upper and lowerbound unavoidable? Using the fact that $\text{BL}(\mathcal{H}) \leq 4C\log(C)\text{L}(\mathcal{H})$, one can get a lowerbound of $\Omega\left(\sqrt{\frac{\text{BL}(\mathcal{H}) \,T}{C \log(C)}}\right)$ on the expected regret in the agnostic setting, where $C = \sup_{x \in \mathcal{X}}|\mathcal{H}(x)|$. Is it possible to remove the dependence on $C$ and improve this lowerbound to $\Omega(\sqrt{\text{BL}(\mathcal{H}) \, T})$?  


While the BLdim provides a sharp quantitative characterization of deterministic learnability in the realizable setting, it is unclear whether it provides a tight \textit{quantitative} characterization of randomized learnability in both the realizable and agnostic settings. Recently, \cite*{filmus2023optimal} gave a combinatorial parameter called the Randomized Littlestone dimension and showed that it exactly quantifies the optimal expected mistake bound for randomized learners in the realizable setting under full-information feedback. Is there a modification of this dimension that can exactly quantify the optimal expected mistake bound for randomized learners in the realizable setting under \textit{bandit feedback}? Can such a dimension also be used to give a sharper upperbound on the expected regret in the agnostic setting?

\acks{AT acknowledges the support of NSF via grant IIS-2007055. VR acknowledges the support of the NSF Graduate Research Fellowship.}

\bibliography{references}

\appendix

\section{Proof of Lemma \ref{lem:finitekalgproj}} \label{app:prf}


To prove Lemma \ref{lem:finitekalgproj}, we slightly modify the generic agnostic learner witnessing the proof of Theorem~\ref{thm:finitek}. Recall that the agnostic learner in Theorem \ref{thm:finitek} first 
constructs a sufficiently small set of experts $E$ such that for every hypothesis $h \in \mathcal{H}$, there exists an expert $\mathcal{E}_h \in E$ whose predictions exactly match $h$ over the stream. Then, the learner runs the non-mixing version of EXP4 (see Figure 4.1 and Theorem 4.2 in \cite{bubeck2012regret}) with this set of experts $E$ on the stream, for an appropriately chosen learning rate. Unfortunately, in all rounds $t \in [T]$, some of the experts constructed by this learner output predictions lying outside of $\mathcal{H}(x_t)$. Thus, EXP4 with this set of experts does not satisfy the constraint imposed by Lemma \ref{lem:finitekalgproj}, that its predictions on $x_t$ must lie in $\mathcal{H}(x_t)$. To fix this issue, we modify each expert $\mathcal{E} \in E$ such that for every $t \in [T]$, we have that $\mathcal{E}(x_t) \in \mathcal{H}(x_t)$ while still maintaining the property of the expert set of \cite{daniely2013price}: for every $h \in \mathcal{H}$, there exists an expert $\mathcal{E}_h \in E$ that predicts exactly like $h$ over the stream. Our modification is simple:  in contrast to the experts constructed by \cite{daniely2013price}, our experts may predict using the ``covering function" $\phi$ (as defined in \cite{daniely2013price}) only if its value lies in $\Hcal(x_t)$. Running the EXP4 algorithm using this new set of experts gives the claimed regret guarantee. We now formalize this construction. 


    Let $(x_1, y_1), ..., (x_T, y_T) \in (\mathcal{X} \times \mathcal{Y})^T$ denote the stream of instances to be observed by the learner and $h^{\star} \in \argmin_{h \in \mathcal{H}} \sum_{t=1}^T \mathbbm{1}\{h(x_t) \neq y_t\}$ denote the optimal hypothesis in hindsight. As stated before, our high-level strategy will be to construct a set of experts $E$ and then run EXP4 using $E$ over the stream. Crucially, we will guarantee that $\mathcal{E}(x_t) \in \mathcal{H}(x_t)$ for every $\mathcal{E} \in E$. 

    Given the time horizon $T$, let $L_T = \{L \subset [T]; |L| \leq \text{L}(\mathcal{H})\}$ denote the set of all possible subsets of $[T]$ of size at most $\text{L}(\mathcal{H})$. For every $L \in L_T$, let $\phi : L \rightarrow \mathcal{Y}$ denote a function mapping time points in $L$ to a label in $\mathcal{Y}$. Let $\Phi_L = \mathcal{Y}^{L}$ denote all such functions $\phi$. For each $L \in L_T$ and $\phi \in \Phi_L$, we define an expert $\mathcal{E}_{L, \phi}$.
    As presented below in Algorithm \ref{alg:expert}, expert $\mathcal{E}_{L, \phi}$ uses the Standard Optimal Algorithm (SOA) \citep{Littlestone1987LearningQW} to make its prediction in rounds $t$ where $t \notin L$. When $t \in L$, there are two cases. If $\phi(t) \in \mathcal{H}(x_t)$, the expert $\mathcal{E}_{L, \phi}$ uses the function $\phi$ to compute a labeled instance to predict and update the SOA with. Otherwise, the expert chooses an arbitrary label in $\Hcal(x_t)$ to predict and update SOA with.  Let $E = \bigcup_{L \in L_T} \bigcup_{\phi \in \Phi_L} \mathcal{E}_{L, \phi}$ denote the set of all Experts parameterized by subsets $L \in L_T$ and $\phi \in \Phi_L$. Crucially, observe that by definition of SOA, for every time point $t \in [T]$ and expert $\mathcal{E} \in E$, it holds that $\mathcal{E}(x_t) \in \mathcal{H}(x_t)$. Finally, note that $|E| \leq (T|\mathcal{Y}|)^{\text{L}(\mathcal{H})}$.

\begin{algorithm}
\caption{Expert $\mathcal{E}_{L, \phi}$}
\label{alg:expert}
\setcounter{AlgoLine}{0}
\KwIn{Independent copy of SOA}
\For{$t = 1,...,T$} {
    Receive example $x_t$

    Let $\tilde{y}_t =  \text{SOA}(x_t)$

    \uIf{$t \in L$ and $\phi(t) \in \mathcal{H}(x_t)$}{
       Predict $\hat{y}_t = \phi(t)$
    } 
    \uElseIf{$t \in L$ and $\phi(t) \notin \mathcal{H}(x_t)$} {
         Predict arbitrary label $\hat{y}_t \in \mathcal{H}(x_t)$
    }
    \uElse{
     Predict $\hat{y}_t = \tilde{y}_t$
    }

     Update SOA by passing $(x_t, \hat{y}_t)$}

\end{algorithm}

We claim that there exists an expert $\mathcal{E}_{L^{\star}, \phi^{\star}} \in E$ such that $ h^{\star}(x_t) = \mathcal{E}_{L^{\star}, \phi^{\star}}(x_t)$ for all $t \in [T]$. To see this, consider the hypothetical stream of instances labeled by the optimal hypothesis $S^{\star} = (x_1, h^{\star}(x_1)), ..., (x_T, h^{\star}(x_T)).$ Let $L^{\star} = \{t_1, t_2, ... \}$ be the indices on which the SOA algorithm would have made a mistake had it run on $S^*$. By the guarantees of the SOA \citep{Littlestone1987LearningQW}, we have that $|L^{\star}|  \leq \text{L}(\mathcal{H})$. Consider the function $\phi^{\star}: L^{\star} \rightarrow \mathcal{Y}$ such that for all $t \in L^{\star}$, we have $\phi^{\star}(t) = h^{\star}(x_t)$. By construction of $E$, there exists an expert $\mathcal{E}_{L^{\star}, \phi^{\star}} \in E$ parameterized by $L^{\star}$ and $\phi^{\star}$.  We claim that for all $t \in [T]$, we have $\mathcal{E}_{L^{\star}, \phi^{\star}}(x_t) = h^{\star}(x_t)$. This follows by observing that $\mathcal{E}_{L^{\star}, \phi^{\star}}$ predicts and updates its copy of SOA using exactly the stream of instances labeled by $h^{\star}$.  Since by definition of $L^{\star}$ the predictions of SOA match that of $h^{\star}$ outside of $L^{\star}$, we have $\mathcal{E}_{L^{\star}, \phi^{\star}}(x_t) = \text{SOA}(x_t) = h^{\star}(x_t)$ for all $t \notin L^{\star}$. Moreover, for those time points $t \in L^{\star}$, we have that $\mathcal{E}_{L^{\star}, \phi^{\star}}(x_t) = \phi^{\star}(t) = h^{\star}(x_t)$ by definition of $\phi^{\star}(t)$. Thus, for all $t \in [T]$, we have that $\mathcal{E}_{L^{\star}, \phi^{\star}}(x_t) = h^{\star}(x_t)$.

Consider the agnostic online learner $\mathcal{A}$ that runs the non-mixing version of EXP4 (see Fig. 4.1 and Theorem 4.2 in \cite{bubeck2012regret}) using the set of experts $E$ with learning rate $\eta = \sqrt{\frac{\ln |E|}{T|\mathcal{Y}|}}$. By the guarantees of the EXP4 algorithm, it follows that 

\begin{align*}
 \mathbb{E}\left[\sum_{t=1}^T \mathbbm{1} \{\mathcal{A}(x_t) \not = y_t \} \right] &\leq  \inf_{\mathcal{E} \in E}\sum_{t=1}^T \mathbbm{1}\{\mathcal{E}(x_t) \not = y_t\} +  e\sqrt{T |\mathcal{Y}|\ln |E|}\\   
 &\leq \sum_{t=1}^T \mathbbm{1}\{\mathcal{E}_{L^{\star}, \phi^{\star}}(x_t) \not = y_t\} + e\sqrt{\text{L}(\mathcal{H})|\mathcal{Y}|T\ln(T|\mathcal{Y}|)}\\
 &= \sum_{t=1}^T \mathbbm{1}\{h^{\star}(x_t) \not = y_t\} + e\sqrt{\text{L}(\mathcal{H})|\mathcal{Y}|T\ln(T|\mathcal{Y}|)}.
  \end{align*}

Finally, observing that for all $t \in [T]$,  $\cup_{\mathcal{E} \in E}\{\mathcal{E}(x_t)\} \subseteq \mathcal{H}(x_t)$ together with the fact that EXP4 algorithm in Figure 4.1 of \cite{bubeck2012regret} samples a label using a distribution supported only over $\cup_{\mathcal{E} \in E}\{\mathcal{E}(x_t)\}$ ensures that $\mathcal{A}(x_t) \in \mathcal{H}(x_t)$ almost surely (equivalently, the EXP4 algorithm samples an expert $\mathcal{E} \in E$ and uses its prediction). This completes the proof of Lemma \ref{lem:finitekalgproj}.

\end{document}